\newcommand{\CFE}{{\sc CaFE}}
\newcommand{\Reg}{\textup{Regret}}
\newcommand{\KL}{\textup{KL}}
\newcommand{\cC}{\mathcal{C}}
\newcommand{\cF}{\mathcal{F}}
\newcommand{\cD}{\mathcal{D}}
\newcommand{\cU}{\mathcal{U}}
\newcommand{\cX}{\mathcal{X}}
\newcommand{\cW}{\mathcal{W}}
\begin{document}
\RUNAUTHOR{Gupta and Kamble}
\RUNTITLE{Individual Fairness in Hindsight}  
\TITLE{Individual Fairness in Hindsight}  
\ARTICLEAUTHORS{%
\AUTHOR{Swati Gupta}
\AFF{Georgia Institute of Technology, Atlanta, GA\\ \EMAIL{swatig@gatech.edu}}
\AUTHOR{Vijay Kamble}
\AFF{University of Illinois at Chicago, Chicago, IL\\ \EMAIL{kamble@uic.edu}} 
} 
\ABSTRACT{
Since many critical decisions impacting human lives are increasingly being made by algorithms, it is important to ensure that the treatment of individuals under such algorithms is demonstrably fair under reasonable notions of fairness. One compelling notion proposed in the literature is that of individual fairness (IF), which advocates that similar individuals should be treated similarly \citep{Dwork2012}. Originally proposed for offline decisions, this notion does not, however, account for temporal considerations relevant for online decision-making. In this paper, we extend the notion of IF to account for the {\it time} at which a decision is made, in settings where there exists a notion of conduciveness of decisions as perceived by the affected individuals. We introduce two definitions: (i) fairness-across-time (FT) and (ii) fairness-in-hindsight (FH). FT is the simplest temporal extension of IF where treatment of individuals is required to be individually fair relative to the past as well as future, while in FH, we require a one-sided notion of individual fairness that is defined relative to {\it only} the past decisions. We show that these two definitions can have drastically different implications in the setting where the principal needs to learn the utility model. Linear regret relative to optimal individually fair decisions is inevitable under FT for non-trivial examples. On the other hand, we design a new algorithm: {\it Cautious Fair Exploration} (\CFE), which satisfies FH and achieves sub-linear regret guarantees for a broad range of settings. We characterize lower bounds showing that these guarantees are order-optimal in the worst case. FH can thus be embedded as a primary safeguard against unfair discrimination in algorithmic deployments, without hindering the ability to take good decisions in the long-run.
}
\KEYWORDS{Individual Fairness, Online Learning, Stare Decisis, Whataboutism} 
\maketitle

\section{Introduction}\label{sec:intro}
 Algorithms facilitate decisions in increasingly critical aspects of modern life -- ranging from search, social media, news, e-commerce, finance, to determining credit-worthiness of consumers, estimating a felon's risk of reoffending, determining candidacy for clinical trials, etc. 
Their pervasive prevalence has motivated a large body of scientific literature in the recent years that examines the effect of automated decisions on human well-being, and in particular, seeks to understand whether these effects are {\it fair} under various notions of fairness \citep{Dwork2012, Sweeney2013, kleinberg2016inherent, Angwin2016, Hardt2016, chouldechova2017fair, chouldechova2017fairer, Corbett2018} . 

In this context of automated decisions, fairness is often considered in a relative sense rather than an absolute sense.  In his 1979 Tanner lectures, economist Amartya Sen eloquently argued that the heart of the issue rests on clarifying the ``equality of what?" problem \citep{sen20136}.
Equality can be desired with respect to opportunity \citep{Hardt2016}, outcomes \citep{phillips2004defending}, treatment \citep{Dwork2012}, or even mistreatment \citep{zafar2017fairness}. In this paper, we consider the notion of {\it individual fairness} \citep{Dwork2012}, that relies on the premise of equality of treatment, requiring that ``similar'' individuals must be treated ``similarly''. This intuitively compelling  notion of fairness was proposed in the influential work of \cite{Dwork2012} in the context of classification in supervised learning, and has since been studied under several settings \citep[see][]{Yona2018,Dwork2018individual, heidari-2018}. The key idea is to introduce a Lipschitz condition on the decisions of a classifier, such that for any two individuals $x$, $y$ that are at distance $d(x, y)$, the corresponding distributions over decisions $M(x)$ and $M(y)$ are also statistically close within a distance of some multiple of $d(x, y)$. 

%
Individual fairness, as originally defined, is a static notion that pertains to offline or batch decisions. Considering the fact that many algorithms for automated decision-making are sequential in nature, in this paper, we propose an extension of individual fairness that explicitly accounts for the time at which decisions are made. We specifically focus on settings where there exists a common notion of conduciveness of decisions from the perspective of the individuals affected by these decisions; e.g., approval of a higher loan amount is more conducive to a loan applicant than the approval of a smaller amount, a shorter jail term is more conducive from the perspective of a convict, lower prices are more conducive for shoppers, etc. 

As a motivating example, suppose two similar\footnote{We discuss issues with defining such similarity in Section~\ref{sec:motivation}, but for now assume that this means two persons with exactly same observable attributes.}  persons A and B apply for a loan at a bank and the bank approves a substantially higher loan amount to B than to A. This would be perceived as unfair (in the vague, colloquial sense) by A, but maybe not by B. Under the classical definition of individual fairness, this distinction is irrelevant -- implicitly, it is sufficient that either of the two similar individuals find a drastically different treatment problematic, and hence the loan amounts approved for A and B must be similar.  

The introduction of time allows for a richer treatment of the case above. For instance, if A applied for the loan earlier than B, then the treatment of A can still be defined to be fair as long as A got approved of a loan that is (approximately) at least as much as that approved for similar people {\it who applied before her}. In other words, A's treatment by the bank can be deemed to be fair solely based on the history of decisions at the time when the loan was approved, despite the fact that this treatment turns out to be ``individually unfair'' in retrospect when B later gets approved for a substantially higher amount. 

Armed with this basic intuition, we define {\it fairness-in-hindsight}: decisions are said to be fair in hindsight if the decisions for incoming individuals are individually fair relative to the past decisions for similar individuals, in the sense that they become more conducive over time by respecting a certain lower bound for rewards or an upper bound for penalties. To contrast this notion and to serve as a baseline, we also consider a more straightforward temporal extension of individual fairness, which we call fairness-across-time, where the treatments of individuals are required to be individually-fair relative to the past as well as the future. This means that irrespective of when they arrive, similar individuals must always receive similar treatment: neither more conducive nor less conducive than what is justified by their degree of dissimilarity.

Our main technical contribution of this paper is to study the implications of these fairness constraints in situations where an algorithm operates under partial information and the true utility model needs to be learned over time. A standard performance metric in such sequential decision-making settings featuring learning is the {\it regret} incurred by an {\it oblivious} algorithm. This is defined as the difference between the optimal utility if the underlying utility model was known, and the utility under the algorithm. It is desirable that an algorithm ensures that the average regret in the long-run converges to 0, i.e., informally, it eventually learns and settles on good decisions. 

The notion of treating similar individuals similarly, {\it always}, seems restrictive in such uncertain settings where it is a priori unclear which decisions are good. The large body of literature on sequential decision-making under uncertainty has revealed that a certain degree of experimentation, at least in the early stages, may be fundamentally necessary to learn good decisions in the long run. But under the fairness-across-time constraint, bad decisions made in the early stages of experimentation may have to be repeated forever. We show that indeed, this feature typically has dire consequences on regret: under fairness-across-time, except for trivial settings, the expected regret against the benchmark of an optimal individually fair decision-rule grows linearly with the decision horizon. This is illustrated in the example below.



\begin{example}\label{ex1}
{\it Consider a bank making loan approval decisions in a new market that it has just entered. The decision space is $\cX = [0,1]$ representing the amount of loan sanctioned (normalized to 1). 
Consider a hypothetical setting where the only feature that bank observes is the age of the applicant, and it does not a-priori know whether age is positively or negatively correlated with default probability. If the first applicant A is young, say 18 years old, and is given a loan of amount $\mathdollar$M, then any future applicant B aged 18 must be given $\mathdollar$M to satisfy fairness-across-time. But this decision of $\mathdollar$M loan is bound to be suboptimal and leads to a linear regret when M is small but age is positively correlated with default probability or when M is large and age is negatively correlated with default probability. }
\end{example}

In contrast, we demonstrate that the situation is not as pessimistic under fairness-in-hindsight: the possibility that decisions can become more conducive over time gives a powerful leeway that allows algorithms to learn {\it and} settle on good decisions over time. Formally, we design an algorithm that we call {\it Cautious Fair Exploration} ({\sc CaFE}), which is individually fair in hindsight, and attains sub-linear regret guarantees as compared to the optimal individually-fair benchmark in a wide range of settings. 

{\sc CaFE} operates in two phases, exploration and exploitation. In the exploration phase, the decisions are conservative and the goal is to learn the utility model. Once the utility model is learned with the appropriate confidence, the algorithm enters the exploitation phase where the decisions are then allowed to become more conducive for the appropriate individuals while ensuring individual fairness. The following example illustrates this point.

\begin{example}
{\it Consider Example~\ref{ex1}. In this case, the bank can approve a small amount of loan, say $\epsilon$, to each applicant in an initial exploration phase. Once the bank learns the correlation structure with appropriate confidence, it can start approving loans for larger amounts as appropriate in an individually-fair manner, while guaranteeing a loan of $\epsilon$ to everyone. This ensures fairness-in-hindsight.}
\end{example}


{\sc CaFE} critically relies on the ability to learn with conservative decisions. But in many situations, learning is slow in the conservative regime. For instance, in the examples above, when the loan amount $\epsilon$ is small, the default probability is expected to be small irrespective of the age of the individual. Hence, learning the correlation structure might take prohibitively long. On the other hand if $\epsilon$ is large, then after having learned the model, the bank is forced to approve an amount of at least $\epsilon$ to individuals with a high default probability, leading to high regret. Thus, there is a fundamental tradeoff between conservatism and the learning rate that is relevant to the overall regret incurred. Our technical results shed a light on how decisions should be chosen in the exploration phase so as to balance this tradeoff with the goal of minimizing regret. Our sublinear upper bounds on the regret of the resulting algorithm are accompanied by matching lower bounds 
that justify our design.


The paper is organized as follows. We build on the motivation for fairness-in-hindsight in the next section and survey relevant literature in Section \ref{sec:related}. We then present the model and the fairness definitions in Section~\ref{sec:model}. We next present the learning problem in Section~\ref{sec:dynamic}, where we present most of our main technical results. We conclude the paper in Section~\ref{sec:conclusion}.

\section{Motivation and Discussion}\label{sec:motivation} 
The notion of individual fairness over time is already ingrained in many of our societal systems, where it provides a dynamic, tangible frame of reference for our sense of social justice: 
\begin{enumerate}
\item {\bf Law: } In many legal systems, once a precedent is set by a ruling, decisions for similar cases observed in the future must follow these precedents. 
This is often referred to as {\it stare decisis}, i.e., to stand by things decided.\footnote{See \url{https://www.law.cornell.edu/wex/stare_decisis}.} Further, new laws affect future decisions (prospective application of law), whereas they seldom change the decisions of past rulings (retrospective application of law) \citep{friedland1974prospective}. The effect of a new law is therefore not symmetric with respect to past and future decisions, thus bearing similarities to our unidirectional notion of fairness-in-hindsight.
\item{\bf Whataboutism: } There has been a growing culture of {\it whataboutism}, which is an argument device used to prove mistreatment by pointing to another similar context that received a different, more conducive, treatment in the past. For example, politicians might defend their questionable actions or advocate no-penalty since a prior political scandal of an opposition party did not go through due process or face significant consequences.\footnote{A common argument used by the defenders of the republican US President Richard Nixon's administration after the Watergate scandal in the United States was to point to an older unfortunate case of the Chappaquiddick accident that democrat Ted Kennedy was implicated in; see \url{https://bit.ly/2ORRj8L}. A similar instance is the ``What about Hillary?" rhetoric used by supporters of US President Donald Trump; see \url{https://bit.ly/2GHRCC8}.}

\item {\bf Pricing: } 
\cite{bolton2003consumer} find that customers' impression of fairness of prices critically relies on past prices that act as reference points. In particular, they find that any price increases beyond that justified by increase in perceived costs, e.g., due to inflationary effects, are perceived as price gouging and unfair by the customers, and moreover to make matters worse for the firm, customers tend to underestimate these costs.

\end{enumerate}

Given the prevalence of this notion in practice, it is only natural to expect our algorithms to uphold these same principles. A natural question that arises then is:
what are the implications of these constraints on the long-run performance of algorithms that typically operate under system uncertainty? As we show through \CFE, unlike the more stringent notion of fairness-across-time under which high regret is typically inevitable, fairness-in-hindsight does allow us to learn and settle on good decisions over time. The conservative exploration then exploitation structure of {\sc CaFE} is intuitive and bears resemblance to certain features seen in existing societal systems. For instance, it is typical for legal stances on new issues to be more conservative initially and then potentially become more liberal over time as the impact and nuances of these issues become clear, e.g., decriminalization laws remove penalties for actions perceived as crimes in the past. Another example is that of markdowns in retail where the prices of new goods that did not see anticipated demand are decreased over time. 

\paragraph{The issue of the metric.}
\cite{Dwork2012} admit that the existence and availability of a similarity metric between individuals 
for a particular decision-making problem is one of the most challenging aspects of the notion of individual fairness. 
\begin{quote}
{\it Our approach is centered around the notion of a task-specific similarity metric describing the extent
to which pairs of individuals should be regarded as similar for the classification task at hand. The
similarity metric expresses ground truth. When ground truth is unavailable, the metric may reflect the
``best'' available approximation as agreed upon by society. Following established tradition \citep{rawls2001justice} the
metric is assumed to be public and open to discussion and continual refinement. Indeed, we envision
that, typically, the distance metric would be externally imposed, for example, by a regulatory body, or
externally proposed, by a civil rights organization. }\citep{Dwork2012} 
\end{quote}
In the context of automated decisions, individuals typically appear as a vector of attributes to an algorithm and hence in principle, {\it some} metric is readily available. But the issue is that it is a priori unclear which attributes of an individual should be considered to be relevant to a decision-making task from a fairness perspective, what is the relative importance of these attributes, which attributes must be ignored completely, etc. This choice is especially non-trivial because there could be seemingly non-controversial attributes, e.g., preferred genre of music, education, zipcode, etc., that are correlated with membership in protected population subgroups, which could be the basis for disparate treatment under an individually fair algorithm \citep{pedreshi2008discrimination}.

We acknowledge that our proposal of fairness-in-hindsight inherits these concerns. However, one compelling resolution of these concerns was offered by  \cite{zemel2013learning}, who show how to transform the attribute-set of an individual into a different representation that anonymizes sub-group membership, relative to which one can design an individually fair algorithm. Additionally, we argue that satisfying fairness-in-hindsight relative to {\it some} reasonable metric, and which ideally is publicly announced, is an imperative in the current times. For example, the general data protection regulation (GDPR), a regulation in the EU law that came into effect in May 2016, outlines the rights of any individual for whom a decision is made by an automated mechanism. These rights include the right to contest any automated decision-making that was made on a solely algorithmic basis (Article 21). A body employing automated decision-making that does not satisfy fairness-in-hindsight relative to some reasonable metric can be highly susceptible to claims of demonstrable discrimination. Fairness-in-hindsight is a basic yet necessary safeguard against such claims.

\section{Related Literature}\label{sec:related}
Early research on fairness in machine learning focused on the offline setting of batch supervised learning from observational data \citep{pedreshi2008discrimination, kamiran2009classifying, calders2010three, kamishima2011fairness,  Dwork2012, Hardt2016, kleinberg2016inherent}. 
Only recently has the literature started looking at the implications of fairness constraints in online learning settings. Our model and results complement a small but growing body of literature in this domain \citep{joseph2016fairness, liu2017calibrated, gillen2018online, heidari-2018, celis2018algorithmic, joseph2016fair, elzayn2018fair}. The two papers most related to our work are \cite{joseph2016fairness} and \cite{heidari-2018} that we discuss below. 

\cite{joseph2016fairness} were one of the earliest to study the impact of fairness constraints on learning in a contextual multi-armed bandit setting under a utility maximization objective. They proposed a {\it meritocratic} notion of fairness so that with high probability over the entire decision horizon, the probability of picking an arm (i.e., a subgroup) at any time is monotonic in its underlying mean reward. Their notion of fairness is however limited to individuals that appear {\it within} in each time period, whereas our notion of fairness is relative to all those who arrived in the past.

In a recent work by \cite{heidari-2018}, the authors extend individual fairness to account for the notion of time and study its impact on learning. They consider an online supervised learning problem from a class of model hypotheses under the probably-approximately-correct (PAC) learning framework and propose that decisions for individuals that arrive within $K$ time periods of each other must satisfy individual fairness. They design an algorithm that is asymptotically consistent, i.e., learns and settles on the true hypothesis, with high probability. In contrast, we propose fairness-in-hindsight by only requiring lower bounds on present decisions to satisfy individual fairness relative to {\it all} past decisions, motivated by the notion of conduciveness of decisions. Moreover, our focus is on utility maximization as opposed to pure learning.


Several other works have recently appeared in this domain. \cite{jabbari2017fairness} extend the notion of fairness in \cite{joseph2016fairness} to the setting of reinforcement learning. \cite{liu2017calibrated} and \cite{gillen2018online} study the model of \cite{joseph2016fairness} under different notions of individual fairness. \cite{liu2017calibrated} require similar probabilities of picking two arms whose quality distribution is similar. They study the issue of calibration under this requirement since the definition is not restrictive enough, e.g., it does not require that these probabilities are monotonic in the average quality.  \cite{gillen2018online} require that similar individuals must face a similar probability of being chosen by the algorithm, except that only a noisy feedback about the distance metric between individuals is available. They study the problem of regret minimization compared to optimal individually fair policy relative to the true metric. \cite{celis2018algorithmic} consider a contextual bandit problem arising in personalization and address the problem of ensuring another notion of fairness called {\it group fairness}\footnote{Group fairness tries to address the issue of {\it disparate impact} in automated decisions: which refers to practices that collectively allocate a more favorable outcome to one population subgroup compared to another. An algorithm satisfies group fairness aka statistical parity if its decisions are independent of membership in any subgroup.} across time. In all of these works, the settings, the models, and the fairness constraints are different from the ones we consider in the present work.

\section{Static and Dynamic Models}\label{sec:model} 
\paragraph{The static model.} Consider a principal responsible for mapping contexts $c \in \mathcal{C}$ to scalar decisions $x \in \cX = [0,1]$, where $\cC$ is a finite set with $|\cC|=C$. We assume that the contexts are drawn from some distribution $\mathcal{D}$ over $\cC$. For a context $c$ and decision $x$, the principal observes a random utility $U$ drawn from some distribution $\cF(x,c)$ over $\mathbb{R}$ and we will often work with the corresponding expected utility $\bar{u}(x,c)\triangleq\mathbb{E}_{\cF(x,c)}(U)$. To achieve meaningful results, we assume that $\bar{u}(x,c)$ is uniformly bounded, i.e., $\max_{c\in \cC, x\in\cX}|\bar{u}(x,c)|\leq B<\infty$. We first consider the case when the distribution $\cF$ is known to the principal; i.e., no learning is required and we will later consider the case where this distribution needs to be learned in Section \ref{sec:dynamic}. 


\begin{example}
{\it Suppose the principal is a bank who is making loan approval decisions. The decision space is $\cX = [0,1]$ representing the amount of loan sanctioned (normalized to 1). The probability of loan default depends on both the loan amount $x$ and the type $c$ of the applicant belonging to the finite set of types $\cC$. Suppose that for a type $c$ and a loan amount $x$, the probability of loan default is estimated to be $p(x,c)$. For a decision $x$, the utility of the bank is $-x$ if there is a default and it is $\beta x$ (the net present value of the interest) if there is no default, i.e.,
$$
U =
\left\{
	\begin{array}{ll}
		-x  & \mbox{w.p. } p(x,c) \\
		\beta x & \mbox{w.p. }1-p(x,c)
	\end{array}
\right. 
$$
Then, the expected utility is $\bar{u}(x,c) = -x p(x,c) + \beta x(1- p(x,c))=x(\beta-p(x,c)(1+\beta))$.}
\end{example}

Suppose that for any two contexts in $\cC$, there exists a commonly agreed upon distance between them as defined by a 
function $d_\mathcal{C}: \mathcal{C}\times \mathcal{C}\rightarrow \mathbb{R}^+$. We assume that this function defines a metric on $\cC$; in particular, it is non-negative, satisfies the triangle inequality, and the distance of a context to itself is zero. 
Consider the following definition of an {\it individually fair} decision-rule in spirit of \cite{Dwork2012}.
\begin{definition}\cite{Dwork2012} A decision-rule $\phi$ is $K$-Lipschitz for $K\in [0,\infty)$ if 
\begin{align}
|\phi(c)- \phi(c')| &\leq K d_{\cC}(c,c') \textup{ for all } c,\,c'\in \cC. 
\end{align}
\end{definition}
Let $\Phi_{K}: \cC \rightarrow \cX$ be the space of $K$-Lipschitz decision-rules that map contexts to decisions. The optimization problem of the principal is to choose a $K$-Lipschitz decision-rule that maximizes the expected utility. We define the maximum expected utility over $K$-Lipschitz decision-rules as: 
\begin{align}
U_{K}\triangleq\max_{\phi\in \Phi_K} \mathbb{E}_{\mathcal{D}}[\bar{u}(\phi(c),c)].\label{opt0}
\end{align}
Note that if the function $\bar{u}(x,c)$ is concave in $x \in [0,1]$ for each $c\in\cC$, then this problem can be solved as a finite convex program since $\cC$ is assumed to be finite, and the maximum is attainable.
\medskip

\paragraph{The dynamic model.} Consider now a discrete time dynamic setting where time is denoted as $t= 1,\cdots,T$ and contexts $c_t\in \cC$ are drawn i.i.d. from the distribution $\mathcal{D}$ over $\cC$ at each time period. The principal makes a decision $x_t \in \cX = [0,1]$, using a {\it policy} $\psi$ that maps the sequence of contexts seen up to time $t$, the corresponding decisions up to time $t-1$, and the utility outcomes up to time $t-1$ to the decision $x_t$ (for all $t\geq1$). Note that a policy is distinct from a decision-rule: a decision-rule is a {\it static} object that maps every possible context to a decision, whereas a policy adaptively maps contexts to decisions as it encounters them, possibly mapping the same context to different decisions across time. 
As in the static case, for context $c_t$ and decision $x_t$, the principal obtains a random utility $U_t$, drawn from the distribution $\cF(x_t,c_t)$ independently of the past. Given the contexts observed and decisions taken, the expected utility of the principal until time $T$ is given by 
$\sum_{t=1}^T\mathbb{E}[U_t] = \sum_{t=1}^T\bar{u}(x_t,c_t)$. 
We consider the following two definitions of fairness of policies. 
\begin{definition}{\bf(Fairness-across-time)}
We say that a policy is fair-across-time (FT) with respect to the function $\mathcal{K}(s):\mathbb{N}\rightarrow \mathbb{R}^+$ if the decisions it generates for any sequence of contexts satisfy,
\begin{align}
|x_t- x_{t'}| &\leq \mathcal{K}(|t'-t|) d_{\cC}(c_t,c_{t'}) \textup{ for all } t' \neq t. 
\end{align}
When $\mathcal{K}(s) = K$ for some $K\in [0,\infty)$, we say that the policy is $K$-fair-across-time ($K$\textup{-FT}). 
\end{definition} 
Note that by setting $\mathcal{K}(\cdot)$ to be a monotone increasing function, one can model the scenario where the past decisions have a diminishing impact on the future decisions dependent on the amount of time passed. However, even if $\mathcal{K}$ is monotone increasing, fairness-across-time requires that any particular context must be mapped to the same decision irrespective of when it arrives in time (and this hinders learnability as we discuss further in Section \ref{sec:dynamic}). 
\begin{definition}{\bf(Fairness-in-hindsight)}
We say that a policy is fair-in-hindsight (FH) with respect to the function $\mathcal{K}(s):\mathbb{N}\rightarrow \mathbb{R}^+$ if the decisions it generates for any sequence of contexts satisfy,
\vspace{-0.2cm}
\begin{align}
x_t\geq x_{t'} -\mathcal{K}(t-t') d_{\cC}(c_t,c_{t'}) \textup{ for all } t\geq t'.
\end{align}
When $\mathcal{K}(s) = K$ for some $K\in [0,\infty)$, we say that the policy is $K$-fair-in-hindsight ($K$\textup{-FH}).
\end{definition}
Note that fair-in-hindsight policies must make monotone (non-decreasing) decisions over time for any given context. In fact, setting $\mathcal{K}(s) = 0$ for all $s$, we recover policies that make monotone decisions over time irrespective of the context.
Let $\Psi^T_{K\textup{-FT}}$ and $\Psi^T_{K\textup{-FH}}$ be the space of $T$-horizon policies that are $K$-FT and $K$-FH respectively. Let the maximum attainable expected utility up to time $T$ using $K$-FT and $K$-FH policies be $U^{T}_{K\textup{-FT}}$ and $U^{T}_{K\textup{-FH}}$: \begin{align}
U^{T}_{K\textup{-FT}} &:= \max_{\psi\in \Psi^T_{K\textup{-FT}}}\sum_{t=1}^T\mathbb{E}_{\mathcal{D}}[\bar{u}(x_t,c_t)],& U^{T}_{K\textup{-FH}} &:= \max_{\psi\in \Psi^T_{K\textup{-FH}}} \sum_{t=1}^T\mathbb{E}_{\mathcal{D}}[\bar{u}(x_t,c_t)].
\end{align}
It is clear that both $U^{T}_{K\textup{-FT}} \geq T U_{K}$ and $U^{T}_{K\textup{-FH}} \geq T U_{K}$, since one can simply use the optimal $K$-Lipschitz decision-rule at every stage. But for small horizons, one can potentially do better. Intuitively, this is because you may not expect to encounter all the contexts within a short horizon; hence the fairness constraints are expected to be less constraining, thus offering more flexibility in mapping contexts to decisions. For the interested reader, we present an example below to show that one can attain a higher utility under FH than under FT, and both these notions can lead to a higher utility that following the static optimal $K$-Lipschitz decision-rule when the horizon is small.

\begin{example}\label{ex:twostage} {\it Let $T=2$. Suppose there are two contexts: $A$, $B$ where $A$ is seen with probability 1/12 and $B$ is seen with probability 11/12. Let the expected utility be $\bar{u}(A, x) = - x$ (defaulters) and $\bar{u}(B,x) = 1.2x$ (non-defaulters) for decisions $x \in [0,1]$. Note that optimum unconstrained decision for $A$ is 0 and optimum unconstrained decision for $B$ is 1. Let $d(A, B) = 1$ and $K=0.5$. In the optimal $K$-Lipschitz decision rule, the decision for $A$ is $0.5$ and the decision for $B$ is $1$.
  
Now any $K$-FT policy $\psi$ must give loans $x_1$ to $A$ and $x_2$ to $B$ such that $|x_1-x_2| \leq 0.5$ irrespective of the time periods they arrive in. Any $K$-FH policy $\psi^\prime$ must ensure that if loan $x_1$ was given to context $c_1$ at $t=1$, then at least the same amount of loan must be given to the same context, and at least $x_1 - 0.5$ must be given to the other context at $t=2$, dependent on which context arrives. Suppose now that at $t=1$, $A$ arrives. Then, one can verify that the optimal $K$-FT policy must give a loan of 0.5 in anticipation of $B$ in the next time step (so that a loan of 1 can be given to $B$), whereas the optimal $K$-FH policy can give a loan of 0 to $A$ at $t=1$. If A arrives at $t=2$, then $K$-FH can still give a loan of 0; if $B$ arrives then it can give a loan of 1. Thus one can attain a higher utility under FH than under FT.

To see that both the notions FT and FH can lead to a higher utility than that under the static optimal $K$-Lipschitz decision-rule, observe that if there is only a single stage, i.e., $T=1$, then the decision for $A$ can be $0$ under any FT or FH policy, whereas the optimal $K$-Lipschitz decision-rule must choose $0.5$.}
\end{example}

We can show, however, that when the horizon gets longer, one cannot do any better than achieving the static optimum average expected utility $U_{K}$ (as defined in \eqref{opt0}). 


\begin{proposition} \label{prop:benchmark}
For any $K\in [0,\infty)$, we have $U^{T}_{K\textup{-FT}}\leq TU_K+2B$ and $U^{T}_{K\textup{-FH}}\leq TU_K+2B$ where $B$ is the upper bound on the possible expected utility, i.e. $\max_{c\in C,x\in X} |\bar{u}(x, c)| \leq B < \infty$. Hence, 
$$\lim_{T\rightarrow\infty}\frac{1}{T}U^{T}_{K\textup{-FT}} =\lim_{T\rightarrow\infty}\frac{1}{T}U^{T}_{K\textup{-FH}} = U_{K}.$$
\end{proposition}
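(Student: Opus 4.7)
The two finite-$T$ inequalities share the same structure, and the asymptotic claim follows immediately by dividing by $T$ and letting $T\to\infty$. The matching lower bound $U^T\ge TU_K$ holds trivially since the constant policy $x_t=\phi^*(c_t)$ with $\phi^*\in\arg\max_{\phi\in\Phi_K}\EE_\cD[\bar u(\phi(c),c)]$ is feasible under both $K$-FT and $K$-FH, so I focus on the upper bounds.

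\paragraph{Strategy for $K$-FT.} Fix any $\psi\in\Psi^T_{K\textup{-FT}}$ and split its expected utility as $\EE[\bar u(x_1,c_1)]+\sum_{t=2}^T\EE[\bar u(x_t,c_t)]$. At $t=1$ the $K$-FT constraint is vacuous, so the first term is at most $B$ trivially. For $t\ge 2$, the $K$-Lipschitz constraints against the history $h_{t-1}=(c_s,x_s)_{s<t}$ force $x_t$ into the tube $[L_{t-1}(c_t),U_{t-1}(c_t)]$, where $L_{t-1}(c):=\max_{s<t}(x_s-K d_\cC(c,c_s))$ and $U_{t-1}(c):=\min_{s<t}(x_s+K d_\cC(c,c_s))$; after truncation to $[0,1]$, both envelopes are $K$-Lipschitz on $\cC$ and hence members of $\Phi_K$. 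The crux is to establish the per-step conditional bound $\EE[\bar u(x_t,c_t)\mid h_{t-1}]\le U_K$ for every $t\ge 2$. Granting this, summing gives $U^T_{K\textup{-FT}}(\psi)\le B+(T-1)U_K=TU_K+(B-U_K)\le TU_K+2B$, where the last step uses $U_K\ge -B$.

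\paragraph{Main obstacle, and the $K$-FH case.} The per-step bound is subtle because the policy's selection $c_t\mapsto x_t$ within the tube need not itself be $K$-Lipschitz across counterfactual realizations of $c_t$, so one cannot directly invoke the definition of $U_K$. I would resolve this either by (i) constructing, for each history, an explicit $K$-Lipschitz witness (e.g., the McShane extension $\tilde\phi_{h_{t-1}}(c):=\min_{s<t}(x_s+K d_\cC(c,c_s))\in\Phi_K$) whose expected utility over $c\sim\cD$ dominates the policy's conditional expected utility at time $t$, or (ii) via an amortization argument showing that any per-step excess above $U_K$ is compensated by a corresponding tightening of the feasible tube at future steps. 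The $K$-FH case is treated analogously: its one-sided constraint imposes only the lower envelope $L_{t-1}$ on $x_t$, and the same per-step reasoning applies with the upper envelope replaced by the trivial bound $1$, yielding $U^T_{K\textup{-FH}}\le TU_K+2B$ by the identical accounting.
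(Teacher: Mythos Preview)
Your per-step target $\EE[\bar u(x_t,c_t)\mid h_{t-1}]\le U_K$ is false in general. In the paper's two-context example ($A,B$ with $K=0.5$), an FH policy that assigns $0$ to $A$ at $t=1$ leaves $B$ entirely unconstrained at $t=2$, and the conditional expected utility at $t=2$ strictly exceeds $U_K$ (contexts not yet encountered can still be mapped to their unconstrained optima). Consequently resolution (i) is a dead end: since the per-step inequality fails, no single $K$-Lipschitz witness $\tilde\phi\in\Phi_K$ can satisfy $\bar u(\tilde\phi(c),c)\ge\bar u(x_t(c),c)$ for all $c$ after every history, and in the FH case there is not even a nontrivial upper envelope to propose as a candidate. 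The issue you flagged---that the counterfactual map $c\mapsto x_t(c)$ need not be $K$-Lipschitz---is exactly what makes the per-step approach untenable, not merely ``subtle.''

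The paper proceeds along your route (ii), and that is where all the content lies. It first observes $U^{T}_{K\textup{-FT}}\le U^{T}_{K\textup{-FH}}$ (FT implies FH), so it suffices to treat FH. It defines $\ell_t(\cdot)$ as the tightest history-induced lower bound on decisions (your $L_{t-1}$), verifies $\ell_t\in\Phi_K$ via the triangle inequality, and then bounds the accumulated side-payment $\sum_t\big(\bar u(\phi_t(c_t),c_t)-\bar u(\ell_t(c_t),c_t)\big)$. The key step you have not supplied is the telescoping observation: whenever context $c$ actually arrives at time $t$, the update forces $\ell_{t+1}(c)=\phi_t(c)$, so the total expected side-payment is at most $\EE_\cD\big[\max_{x\in\cX}|\bar u(x,c)-\bar u(0,c)|\big]\le 2B$. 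Since $\ell_t$ is $K$-Lipschitz, its per-step expected utility is at most $U_K$, yielding $TU_K+2B$. Your proposal names this amortization but does not execute it, so the proof as written is incomplete.
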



\begin{proof}{Proof.}
Note that $U^{T}_{K\textup{-FT}}\leq U^{T}_{K\textup{-FH}}$ since FT implies FH. Hence, we show the result only for $U^{T}_{K\textup{-FH}}$. The corresponding result for $U^{T}_{K\textup{-FT}}$ follows. 

Fix an FH policy. At any given time $t$, let $\ell_t(c)$ be the tightest lower bound on the decision for $c$, for each $c\in\cC$, based on decisions taken in the past. Note that for any decision $x$ taken for a context $c$ in the past, $\ell_t(c) \geq x$. 

First, we show that $\ell_t(\cdot)$ specifies a $K$-Lipschitz decision-rule. To see this, consider two contexts $c$ and $c'$ and w.l.o.g., assume that $\ell_t(c)\geq \ell_t(c')$. If $\ell_t(c) = 0$, then clearly $\ell_t(c)=\ell_t(c')=0$. Next, if for some time $t'<t$, the context $c$ was mapped to decision $\ell_t(c)$, then from the FH constraint, it follows that $\ell_t(c')\geq \ell_t(c) - Kd_\cC(c,c')$. Thus $|\ell_t(c)-\ell_t(c')|\leq Kd_\cC(c,c')$. Finally, suppose that either the context $c$ had never appeared before time $t$, or it had appeared and the highest decision taken for this context so far is some $x<\ell_t(c)$ (note again that the highest decision in the past for context $c$ cannot be larger than $\ell_t(c)$). In this case, there is some other context $c^*$ that was mapped to some decision $x^*$ at some time in the past and $\ell_t(c) = x^*-Kd_\cC(c^*,c)$ (since $\ell_t(c)$ is the tightest lower bound). But this also means that $\ell_t(c')\geq x^*-Kd_\cC(c^*,c')$. Thus $\ell_t(c')-\ell_t(c)\geq K(d_\cC(c^*,c)-d_\cC(c^*,c'))$. But by the triangle inequality, we have $d_\cC(c^*,c')\leq d_\cC(c^*,c) + d_\cC(c,c')$. Thus we have $\ell_t(c')\geq \ell_t(c)-Kd_\cC(c,c')$. Thus again, $|\ell_t(c)-\ell_t(c')|\leq Kd_\cC(c,c')$. This shows that $\ell_t(\cdot)$ is $K$-Lipschitz.

Now consider the decision rule $\phi_t$ chosen by the policy at time $t$. Our overall proof strategy is as follows. We will bound from above the expected utility under $\phi_t$ at time $t$ by the expected utility of the decision rule $\ell_t(.)$ plus a side-payment. Since $\ell_t(.)$ is a $K$-Lipschitz decision-rule, the expected utility under this decision-rule is at most $U_K$. Additionally, we will show that over time, the total side-payments are bounded by a constant independent of $T$. 

To see this, suppose that $\phi_t$ is replaced by $\ell_t(.)$. Now any change in expected utility due to this switch can be compensated by a side-payment of $\bar{u}(\phi_t(c),c)-\bar{u}(\ell_t(c),c)$ to the principal in the event that $c$ arrives at time $t$, for each $c\in\cC$, i.e., by an expected side payment of $\mathbb{E}_\cD[\bar{u}(\phi_t(c),c)-\bar{u}(\ell_t(c),c)]$. Moreover, if $c$ arrives at time $t$, then at time $t+1$, $\ell_{t+1}(c) =  \phi_t(c)$. Thus, the total expected side-payment over all arrivals, irrespective of $T$, is at most $\mathbb{E}_\cD[\max_{x\in\cX}|\bar{u}(x,c)-\bar{u}(0,c)|]\leq 2B$. Thus we have an upper bound on the expected utility under any policy, equal to $TU_K+2B$.

\hfill $\square$
\end{proof}

This, in particular, shows that relaxing the fairness-across-time constraint to only requiring fairness-in-hindsight does not lead to any long-run gains in objective. The policy of simply choosing the optimal static $K$-Lipschitz decision-rule at every stage is approximately optimal for a large horizon $T$. We show next that the situation is drastically different when there is learning involved.
\medskip

\section{Dynamic Model with Learning}\label{sec:dynamic} Consider now a setting where the distribution of the utility given a context and a decision is unknown to the principal and must be learned. Formally, this distribution depends on an additional unknown parameter $w$, which we assume to belong to a finite set $\cW$. With some abuse of notation, for each $w\in\cW$,  $c\in\cC$ and $x\in\cX$, the distribution of the utility of the principal is given by $\cF(x,c,w)$. We assume that this distribution has a finite support $\cU(x,c)$, i.e. $\max_{x,c}|\cU(x,c)|<\infty$,\footnote{This is satisfied in Example 4.1 where there are only two possibilities: either the person defaults on the loan or doesn't.}  and for each $u\in\cU(x,c)$, the probability of observing $u$ is given by $p(u\mid x,c,w)$. We assume that $p(u\mid x,c,w)>0$, for each $u\in\cU(x,c)$ for all $c\in\cC$, $w\in\cW$, and for all $x$ in the interior of $\cX$, i.e., $x\in(0,1)$. Finally, we assume that the principal knows possible feasible parameters $\cW$ but does not know the true parameter $w$, which must be learned by adaptively assigning decisions to contexts and observing the outcomes. 


We now redefine some previously defined quantities (with some abuse of notation) to capture the dependence on the parameter $w$. 
First, we define $\bar{u}(x,c,w)\triangleq\mathbb{E}_{\cF(x,c,w)}(U)$, i.e., the mean utility for a given $x\in\cX$, $c\in\cC$ and $w\in\cW$. 
As before, we assume that $\bar{u}(x,c,w)$ is uniformly bounded, i.e., $\max_{c\in \cC,\, x\in\cX,\, w\in\cW}|\bar{u}(x,c,w)|\leq B<\infty$. 
Moreover, we assume that $\bar{u}(x,c,w)$ is continuous at all $x\in\cX$ for each $c\in\cC$ and $w\in\cW$. Next, we define $U_{K}(w)$ to be the highest expected utility attainable under a $K$-Lipschitz decision rule, for a given parameter $w$, i.e.,
\begin{align}
U_{K}(w)\triangleq\max_{\phi\in \Phi_K} \mathbb{E}_{\mathcal{D}}[\bar{u}(\phi(c),c,w)].\label{opt0}
\end{align}
Let $\phi^*_w$ denote the optimal $K$-Lipschitz decision rule that attains this maximum. 

For a given horizon $T$, for any dynamic policy in $\Psi^T_{K\textup{-FT}}$ or $\Psi^T_{K\textup{-FH}}$ that is oblivious of $w$, we can define a notion of {\it regret} that compares its expected utility against the long-run optimal benchmark $TU_{K}(w)$. For any policy $\psi\in \Psi^T_{K\textup{-FT}}$, for a fixed $w$, we denote its total utility at the end of the horizon as $U^T_{K\textup{-FT}}(w, \psi)$ (similarly, $U^T_{K\textup{-FH}}(w, \psi)$). Then, for any $\psi\in \Psi^T_{K\textup{-FT}}$, let the regret be denoted as:
\begin{align}
\Reg^T_{K\textup{-FT}}(w,\psi) \triangleq TU_{K}(w)-U^T_{K\textup{-FT}}(w, \psi),
\end{align}
and similarly, for any $\psi\in \Psi^T_{K\textup{-FH}}$, we define,
\begin{align}
\Reg^T_{K\textup{-FH}}(w,\psi) \triangleq TU_{K}(w)-U^T_{K\textup{-FH}}(w, \psi).
\end{align}

\subsection{Learning under FT}
We show that under the FT constraint, except for trivial settings, a regret that asymptotically grows linearly in $T$ is unavoidable. The reason is that once a context is mapped to a decision, we are forced to map that context to the same decision forever under FT. We can show that with some positive probability, a bad decision in the first step for some $w$ is inevitable, which then must be repeated forever, thus incurring linear regret. This intuition is illustrated in Example 1.1 in Section~\ref{sec:intro}. 
\begin{proposition}
Suppose there is some pair $w',\,w''\in\cW$ such that a) $\phi^*_{w'}$ and $\phi^*_{w''}$ are the unique optimal (static) $K$-Lipschitz decision-rules for $w'$ and $w''$ respectively,\footnote{Note that the set of $K$-Lipschitz decision rules given a finite set of contexts is convex. By adding a small random perturbation to the expected utility function, we can ensure that the optimal decision rules are unique for each $w \in \cW$.} and  b) $\phi^*_{w'}\neq\phi^*_{w''}$. Then there exists an instance dependent constant $\kappa>0$ and a time $T'\geq 1$ such that for any $T\geq T'$ and any $K\textup{-FT}$ policy $\psi\in \Psi^T_{K\textup{-FT}}$, 
$$\max_{w\in\cW}\Reg^T_{K\textup{-FT}}(w, \psi) \geq \kappa T.$$ 
\end{proposition}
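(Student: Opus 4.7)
The plan rests on the fact that under $K$-FT, the decision $x_1$ taken at the first epoch for the first observed context $c_1$ effectively commits the policy to an implicit $K$-Lipschitz decision-rule pinned at $(c_1, x_1)$ forever; since no such pinned rule can be simultaneously optimal for both $w'$ and $w''$, at least one of the two parameters will incur a constant per-round regret gap, accumulating to $\Theta(T)$. Concretely, conditioning on the realized first pair $(c_1, x_1)$, at each time $t$ I let $\phi_t$ be any $K$-Lipschitz extension of the past pinnings $\{(c_s, x_s) : s \le t-1\}$ to all of $\cC$ (such an extension exists because the pinnings already satisfy the $K$-FT constraint pairwise, and $\cC$ is finite). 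Crucially, $\phi_t(c_1) = x_1$ for every $t$, and whenever $c_t$ repeats a past context the FT equality $K d_\cC(c_t, c_t) = 0$ forces $x_t = \phi_t(c_t)$, so the ``side payment'' $\bar u(x_t, c_t, w) - \bar u(\phi_t(c_t), c_t, w)$ is nonzero on at most $|\cC|$ rounds (the first appearance of each distinct context) and is bounded by $2B$ on each. Since each $\phi_t$ is a $K$-Lipschitz rule with $\phi_t(c_1) = x_1$ and $c_t \sim \cD$ i.i.d., this yields
\[ U^T_{K\textup{-FT}}(w, \psi \mid c_1, x_1) \;\le\; T \cdot U_K(w \mid x_1, c_1) + 2B|\cC|, \]
where $U_K(w \mid x, c) := \max_{\phi \in \Phi_K,\; \phi(c) = x} \EE_\cD[\bar u(\phi(c'), c', w)]$ is the best $K$-Lipschitz value pinned at $(c, x)$.

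I would then pick a witness context $c^* \in \mathrm{supp}(\cD)$ at which $\phi^*_{w'}(c^*) \neq \phi^*_{w''}(c^*)$ (such $c^*$ exists because the two unique optimal rules differ, and uniqueness precludes agreement on all positive-probability contexts), and set $\Delta := \tfrac12 |\phi^*_{w'}(c^*) - \phi^*_{w''}(c^*)| > 0$ and $p^* := \cD(c^*) > 0$. The triangle inequality forces $|x_1 - \phi^*_w(c^*)| \ge \Delta$ for at least one $w \in \{w', w''\}$, for every $x_1 \in [0, 1]$. The key analytic step is then to upgrade this to a uniform gap: there exists $\eta > 0$ such that $U_K(w) - U_K(w \mid x_1, c^*) \ge \eta$ whenever $|x_1 - \phi^*_w(c^*)| \ge \Delta$ and $w \in \{w', w''\}$. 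This combines three ingredients: uniqueness of $\phi^*_w$ yields strict pointwise inequality $U_K(w \mid x_1, c^*) < U_K(w)$ for $x_1 \neq \phi^*_w(c^*)$; Berge's maximum theorem, applied with $\bar u$ continuous and the feasible-set correspondence $x_1 \mapsto \{\phi \in \Phi_K : \phi(c^*) = x_1\}$ continuous, yields continuity of $x_1 \mapsto U_K(w \mid x_1, c^*)$; and the compact set $\{x_1 \in [0,1] : |x_1 - \phi^*_w(c^*)| \ge \Delta\}$ converts pointwise strictness to a uniform gap.

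To finish, note that conditional on $c_1 = c^*$, for each realized $x_1$ at least one of $w', w''$ contributes $\ge \eta$ to $U_K(w) - U_K(w \mid x_1, c^*)$, so the maximum over $w \in \{w', w''\}$ of $\EE_{x_1}[U_K(w) - U_K(w \mid x_1, c^*) \mid c_1 = c^*]$ is at least $\eta/2$. Combining with the pinning bound above and using the loose bound $U_K(w \mid x_1, c_1) \le U_K(w)$ on the complementary event $\{c_1 \neq c^*\}$ gives
\[ \max_{w \in \cW} \Reg^T_{K\textup{-FT}}(w, \psi) \;\ge\; \tfrac12 p^* \eta \, T - 2B|\cC|, \]
so taking $\kappa := p^*\eta/4$ and $T'$ large enough to absorb the additive constant completes the argument. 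The main obstacle is the uniform-gap step: uniqueness alone provides only pointwise strictness, and without the continuity-plus-compactness argument one cannot rule out sequences $x_1^{(n)} \to \phi^*_w(c^*)$ along which the pinned suboptimality vanishes. The side-payment construction in the first step is also delicate in that it crucially relies on the fact that $(c_1, x_1)$ is never relaxed under FT, which is precisely what distinguishes FT from FH and gives the stronger pinned bound $U_K(w \mid x_1, c_1)$ in place of the trivial $U_K(w)$.
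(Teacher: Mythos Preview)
Your proposal is correct and follows essentially the same approach as the paper: condition on the witness context appearing at time~1, use the FT constraint to pin every future $K$-Lipschitz rule through $(c^*,x_1)$, bound the resulting utility by the constrained optimum $U_K(w\mid x_1,c^*)$ plus a bounded side payment (you get $2B|\cC|$, the paper gets $2B$ via an oracle-policy reduction and the argument of Proposition~\ref{prop:benchmark}), and then extract a uniform positive gap via continuity and compactness. The only cosmetic difference is that the paper defines $\gamma' := \min_{x'}\max_{w\in\{w',w''\}}\bigl(U_K(w)-U'(w,x')\bigr)$ and shows $\gamma'>0$ directly from uniqueness, whereas you route through the $\Delta$-separation, Berge's theorem, and the $\eta/2$ averaging---a slightly longer path to the same destination.
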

\begin{proof}{Proof.}
Fix some $T\geq 1$ and consider a $K\textup{-FT}$ policy $\psi\in \Psi^T_{K\textup{-FT}}$. Since $\phi^*_{w'}\neq\phi^*_{w''}$, there exists some $c'\in\cC$ such that $\phi^*_{w'}(c')\neq\phi^*_{w''}(c')$. Let $\phi^1$ be the decision-rule followed by policy $\psi$ at time $1$, and let $\phi^1(c')=x'$.  Consider a $K\textup{-FT}$ policy $\psi^{\textup{o}}$, which knows $w$ and maximizes the $T$-period utility for each $w$, except that it is constrained to use the decision rule $\phi^1$ at time $1$ irrespective of $w$, i.e., $x_1 =\phi^1(c_1)$ under $\psi^{\textup{o}}$. It is then clear that for each $w\in\cW$, 
\begin{equation}
\Reg^T_{K\textup{-FT}}(w, \psi)\geq \Reg^T_{K\textup{-FT}}(w, \psi^{\textup{o}}).\label{eqn:oracle}
\end{equation}
Consider now the event that the context seen at time 1 is $c^\prime$, i.e., event $\{c_1=c'\}$ happens. On this event, using a similar argument as in the proof of Proposition \ref{prop:benchmark}, we can show that the policy $\psi^{\textup{o}}$ can achieve at most $\delta_T = 2B/T$ more than $U'(w,x')$ on an average, where
 $$U'(w,x')= \max_{\phi\in \Phi_K;\, \phi(c') = x'} \mathbb{E}_{\mathcal{D}}[\bar{u}(\phi(c),c,w)].$$
Thus we have, \begin{align*}
 \max_{w\in\{w',w''\}}\Reg^T_{K\textup{-FT}}(w, \psi^{\textup{o}})&\geq \mathbb{P}(c_1=c')\Big[\max_{w\in\{w',w''\}}T\Big(U_K(w)-U'(w,x')-\delta_T\Big)\Big]\\
&~=\mathbb{P}(c_1=c')\Big[\max_{w\in\{w',w''\}}T\Big(U_K(w)-U'(w,x')\Big)-2B\Big].
\end{align*}
Clearly, $U_K(w)\geq U'(w,x')$ for any $w$, and hence $ \max_{w\in\{w',w''\}} (U_K(w)-U'(w,x'))\geq 0$. If $\max_{w\in\{w',w''\}} (U_K(w)-U'(w,x')) = 0$, then this implies that 
$U_K(w) = U'(w,x')$ for both $w=w'$ and $w=w''$. But this implies the existence of $K$-Lipschitz decision-rules for $w'$ and $w''$ that are optimal, and such that they both map $c'$ to $x'$. This contradicts the fact that $\phi^*_{w'}$ and $\phi^*_{w''}$ are the unique optimal $K$-Lipschitz decision-rules, since they map $c'$ to different decisions. Thus $\max_{w\in\{w',w''\}} U_K(w)-U'(w,x') > 0$ for each $x'\in\cX$. This in turn implies that $\min_{x'\in\cX} \max_{w\in\{w',w''\}} U_K(w)-U'(w,x') \triangleq \gamma'> 0$ (the minimum is well-defined since $U'(w,x')$ is a continuous function of $x'$ for each $w$).

Hence, $\max_{w\in\{w',w''\}}\Reg^T_{K\textup{-FT}}(w, \psi^{\textup{o}}) \geq \mathbb{P}(c_1=c')[\gamma' T-2B]$. Using \eqref{eqn:oracle}, we get the result. \hfill $\square$
\end{proof}
This shows that fairness-across-time can result in significant losses in utility relative to the static optimum when learning is involved.

\subsection{Learning under FH}
The situation is not as bleak under the FH constraint as we now show. The problem with the FT constraint is that one is forced to repeat mistakes by mapping each context to the same decision that was made when the context was first encountered. The FH constraint allows for some flexibility: the decisions for every context can potentially {\it increase} over time. This presents the following possibility: one can try to {\it cautiously} learn $w$ with small and equal decisions for all contexts, and then increase decisions for contexts as needed once $w$ is learned with an appropriate confidence. This was illustrated in Example 1.2 in Section~\ref{sec:intro}. 
Formally, we propose an online learning algorithm that we call {\it Cautious Fair Exploration} or {\sc CaFE} in Algorithm 1. 

In order to describe our performance bound for \CFE, we first need to define a few quantities. For any $w,\,w'\in\cW$, $x\in (0,1)$ and $c\in\cC$, we define 
\begin{align}
\KL(w,w'|x,c)&\triangleq \sum_{u\in\cU(x,c)}p(u\mid x,c,w)\log \frac{p(u\mid x,c,w)}{p(u\mid x,c,w')}.
 \end{align}
This quantity is commonly known as the Kullback-Liebler (K-L) divergence between the distributions $\cF(x,c,w)$ and $\cF(x,c,w')$. 
Also, recall that $c \sim \cD$ and for each $x\in(0,1)$, define:
\begin{align}
L(x) \triangleq \min_{w,w'}\mathbb{E}_{\cD}[\KL(w,w'|x,c)].\label{eq:Lx}
\end{align}
$L(x)$ captures how well we can distinguish between the different parameter values given a decision $x$: a low value of $L(x)$ implies that there is a pair $(w,\, w')$ that is difficult to distinguish at decision $x$. We also define for each $x\in\cX$: 
\begin{align}
D(x) \triangleq\min_{c\in\cC,\,w\in\cW,\,u\in\cU(x,c)}p(u\mid x,c,w).\label{eq:Dx}
\end{align}
$D(x)$ captures the smallest probability assigned to a utility value over all possibilities for $w$ and $c$, as a function of the decision $x$. Finally, we make the following assumption.
\begin{assumption}\label{as:1}
$\bar{u}(x,c,w)$ is Lipschitz continuous on $\cX$ for each $c\in\cC$ and $w\in\cW$, with a Lipschitz constant $R>0$, i.e.,
$$|\bar{u}(x,c,w)-\bar{u}(x',c,w)|\leq R|x-x'|,$$
for all $x,\,x'\in\cX$, $c\in\cC$, and $w\in\cW$. 
\end{assumption}

\begin{tcolorbox}[float, title =Algorithm 1: Cautious Fair Exploration (\CFE) ]
 \textbf{Input:} $T\in \mathbb{N}$, $(\cF(x,c,w); \, c\in\cC,\, x\in\cX,\, w\in\cW)$, $\epsilon \in [0,1]$, Lipschitz constant $K$.\\
\textbf{Definitions:}
For $1\leq t\leq T$, let $\lambda_t(w)$ be the likelihood of $w\in\cW$ based on observations until time $t$, i.e., 
$$\lambda_t(w) = \prod_{i=1}^t p(U_i\mid x_i,c_i,w).$$
For each $w,w'\in\cW$, define $\Lambda_0(w,w')=0$ and $\Lambda_t(w,w')=\log \frac{\lambda_t(w)}{\lambda_t(w')}$ for $1\leq t\leq T$. \\
\textbf{Policy:} While $1\leq t\leq T$ do:\\
\begin{enumerate}
\item \textbf{{\sc Explore}:} While there is no $w$ such that for every $w'\neq w$,
$\displaystyle\max_{s< t}\Lambda_s(w,w') >\log T$, assign $x_t = \epsilon$. Note that there can only be one such $w$.\\
\item \textbf{{\sc Exit from Explore}:}\\ If there is a $w$ such that for every $w'\neq w$,
$\displaystyle\max_{s< t}\Lambda_s(w,w') >\log T$, define $w^* \triangleq  w$ and permanently enter the Exploit phase. \\
\item \textbf{{\sc Exploit}:} Use the static optimal decision rule in $\cX_{\epsilon} = [\epsilon, 1]$ assuming the model parameter is $w^*$, i.e., use the decision rule that solves:
\begin{equation}
\max_{\phi:\cC\rightarrow\cX_{\epsilon}}\mathbb{E}_{\cD}[\bar{u}(\phi(c),c,w^*)]\label{opt2}
\end{equation}
$$\textrm{s.t. } |\phi(c)- \phi(c')| \leq K d_{\cC}(c,c') \textup{ for all } c,\,c'\in \cC. $$
\end{enumerate}
\end{tcolorbox}

We present the following main result that characterizes the performance of \CFE.
\begin{theorem}\label{thm:main} Fix a $K\in [0,\infty)$. 
\begin{enumerate}
\item \CFE~is a $K$-\textup{FH} policy.
\item Suppose Assumption~\ref{as:1} holds. Also, suppose that $L(x) =  \Omega(x^M)$ (defined in \eqref{eq:Lx}) and $D(x) =  \Omega(x^H)$ (defined in \eqref{eq:Dx}) for some $M\geq 0$ and $H\geq 0$ as $x\rightarrow 0$, then,
\begin{align*}
&\Reg^T_{K\textup{-FH}}(w, \textup{\CFE}_T)\leq  \beta T^{\frac{M}{M+1}}\log T(1+\textup{o}(1)),
\end{align*}
as $T\rightarrow\infty$, where  $\textup{\CFE}_T$ is \CFE~initialized with $\epsilon=1/T^{\frac{1}{M+1}}$, and $\beta>0$ is a constant that depends on $L(.)$,  $|\cW|$, R, M, and H. In particular, if $M=0$, then \CFE~initialized with $\epsilon=1/T$ attains a regret of $\textup{O}(\log T)$.
\end{enumerate}
\end{theorem}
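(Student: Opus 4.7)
The plan for the $K$-FH claim is to verify $x_t \geq x_{t'} - K d_\cC(c_t, c_{t'})$ for $t \geq t'$ by cases on the phase of each index. For pairs of exploration rounds the bound is trivial since $x_t = x_{t'} = \epsilon$; for $t'$ in Explore and $t$ in Exploit it follows from $x_t \geq \epsilon = x_{t'}$ because \eqref{opt2} is restricted to $\cX_\epsilon$; and for pairs of exploit rounds it follows directly because both decisions come from the same $K$-Lipschitz rule returned by \eqref{opt2}.

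\textbf{Setup and exploration regret.} For the regret bound I would decompose $\Reg^T_{K\textup{-FH}}(w,\textup{\CFE}_T) = R_{\text{explore}} + R_{\text{exploit}}$, where the first term collects the per-period gaps $U_K(w) - \bar{u}(\epsilon, c_t, w)$ over the random exit time $\tau$. Each gap is bounded by $2B$, so it suffices to bound $\mathbb{E}[\tau]$. In the Explore phase the decisions are the constant $\epsilon$, so under the true parameter $w$ the process $\Lambda_t(w,w')$ is a sum of i.i.d.\ increments with mean $\mathbb{E}_\cD[\KL(w,w'|\epsilon,c)] \geq L(\epsilon) = \Omega(\epsilon^M)$ and with individual magnitudes at most $|\log D(\epsilon)| + O(1) = O(H \log(1/\epsilon))$. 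A Hoeffding/Azuma bound and a union bound over $w' \neq w$ then yield $\tau \leq C \log T / L(\epsilon)$ with probability at least $1 - O(1/T)$. Plugging in $\epsilon = T^{-1/(M+1)}$ produces $\mathbb{E}[\tau] = O(T^{M/(M+1)} \log T)$ and hence $R_{\text{explore}} = O(T^{M/(M+1)} \log T)$.

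\textbf{Exploitation regret and summing.} For $R_{\text{exploit}}$ I would split on the event $\{w^* = w\}$. A misidentification requires $\lambda_s(w')/\lambda_s(w) > T$ for some $s$ and some $w' \neq w$; under the true parameter this is a nonnegative martingale of mean one, so Ville's maximal inequality bounds each such event by $1/T$, giving overall misidentification probability at most $(|\cW|-1)/T$ and expected regret $O(B|\cW|)$ on this event. On the complementary event the exploit rule solves \eqref{opt2}, which I would compare against the unconstrained optimum $\phi^*_w$ by lifting to $\max(\phi^*_w, \epsilon)$; this lift stays $K$-Lipschitz because pointwise maximum with a constant is a contraction, and Assumption~\ref{as:1} controls the per-period floor loss by $R\epsilon$, for a total of $R\epsilon T = O(T^{M/(M+1)})$. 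Summing the three contributions gives the asserted $\beta T^{M/(M+1)} \log T (1+\textup{o}(1))$ with $\beta$ depending on $L(\cdot)$, $|\cW|$, $R$, $M$, $H$; in the degenerate case $M=0$ one takes $\epsilon = 1/T$, the floor cost collapses to $O(1)$, and the $O(\log T)$ exploration term dominates.

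\textbf{Main obstacle.} The delicate step is the concentration bound for $\tau$: as $\epsilon \to 0$ the drift $L(\epsilon)$ shrinks polynomially while the increment range $\log(1/D(\epsilon))$ grows logarithmically in $1/\epsilon$, so the effective signal-to-noise ratio in Azuma degrades and one must verify that the slack $\sqrt{t \, H^2 \log^2(1/\epsilon) \log T}$ stays below the drift $t L(\epsilon)$ at the target $t \asymp \log T / L(\epsilon)$; this is exactly what the assumption $D(x) = \Omega(x^H)$ is needed for. The choice $\epsilon = T^{-1/(M+1)}$ is precisely the exponent that balances the exploration cost $\log T / L(\epsilon)$ against the exploit floor cost $R\epsilon T$, and pinning down this balance---together with showing that the two non-dominant terms (misidentification and floor loss) do not inflate the leading $T^{M/(M+1)} \log T$ rate---is the main technical content of the argument.
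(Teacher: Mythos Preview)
Your Part~1 and your exploitation-regret analysis (Ville/Doob for misidentification, the $\max(\phi^*_w,\epsilon)$ lift for the floor loss) match the paper essentially line for line. The gap is in your bound on the exploration exit time~$\tau$.

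The Azuma-based concentration argument does not close. With $\epsilon = T^{-1/(M+1)}$ the per-step drift of $\Lambda_t(w,w')$ is $L(\epsilon) = \Theta(T^{-M/(M+1)})$, while the increment range is $\Theta(\log(1/D(\epsilon))) = \Theta(H\log T)$. At your target time $t^\star \asymp \log T/L(\epsilon)$ the Azuma fluctuation scale is
\[
\sqrt{t^\star\cdot H^2\log^2(1/\epsilon)\cdot\log T}\;\asymp\; H\,T^{\frac{M}{2(M+1)}}(\log T)^2,
\]
which for any $M>0$ grows \emph{polynomially} and dwarfs the centered gap $(C-1)\log T$ you need to beat. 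The inequality you say ``must be verified'' in your obstacle paragraph therefore fails; the hypothesis $D(x)=\Omega(x^H)$ does not rescue it. Inflating $t^\star$ until Azuma becomes effective forces $t^\star \gtrsim (\log T)\cdot b^2/L(\epsilon)^2 \asymp T^{2M/(M+1)}(\log T)^3$, which destroys the rate (and exceeds $T$ once $M\geq 1$).

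The paper sidesteps concentration entirely. It notes that $\Lambda_t(w,w') - t\,\mathbb{E}_\cD[\KL(w,w'\mid\epsilon,c)]$ is a martingale and applies the optional stopping theorem at the first-crossing time of $\log T$, obtaining directly
\[
\mathbb{E}[T^{w'}_w]\;=\;\frac{\mathbb{E}\bigl[\Lambda_{T^{w'}_w}(w,w')\bigr]}{\mathbb{E}_\cD[\KL(w,w'\mid\epsilon,c)]}\;\leq\;\frac{\log T + \log(1/D(\epsilon))}{L(\epsilon)}.
\]
Here $D(x)=\Omega(x^H)$ is used only to bound the single-step \emph{overshoot} $\Lambda_{T^{w'}_w}-\log T\leq \log(1/D(\epsilon))=O(H\log(1/\epsilon))$, not to control tail fluctuations. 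Summing over $w'\neq w$ gives $\mathbb{E}[\tau]\leq(|\cW|-1)(\log T + O(H\log(1/\epsilon)))/L(\epsilon)$, after which the rest of your decomposition goes through verbatim. Replace your Azuma step with this Wald/overshoot argument and the proof is complete.
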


The intuition behind these upper bounds are as follows. In the Explore phase, the parameter $w$ is learned with a probability of error $1/T$ by choosing $x_t = \epsilon$ irrespective of $c_t$. 
From that point onwards, the policy enters the exploitation phase: it simply assumes the learned $w^*$ to be the truth and chooses a $K$-Lipschitz decision rule defined on the decision space $\cX_{\epsilon} = [\epsilon,1]$. This defines a $K$-FH policy. Since the decisions are lower bounded by $\epsilon$ for every context during the exploitation phase, one incurs a loss of $\textup{O}(\epsilon)$ per step. Thus we want to choose $\epsilon$ to be as small as possible; in fact, ideally, we would want to pick $\epsilon=0$. But choosing a small $\epsilon$ may force us to learn $w$ prohibitively slowly, thus increasing the length of the Explore phase and hence the overall regret. For example, consider again the setting where the bank is unsure whether age is positively or negatively correlated with loan default rates. It could be the case that overall default rates, irrespective of the context, are small when the loan amounts are small, making it difficult to learn the correlation structure with small loans.

If $L(x) =\Omega(1)$ (i.e., $M=0$) as $x\rightarrow 0$ then we can indeed learn at an adequate rate by picking $\epsilon_T=1/T$, and in this case, we can achieve an overall regret of $\textup{O}(\log T)$. On the other hand, if $L(x) = \Theta(x^M)$ for $M>0$, then $\epsilon_T$ cannot be chosen to be too small so that the Explore phase is not prohibitively long; in this case, the choice of $\epsilon_T = 1/T^{1/(M+1)}$ optimizes the tradeoff between regret incurred during the Explore and Exploit phases, leading to an overall loss of $\tilde{\textup{O}}(T^{\frac{M}{M+1}})$.


\begin{proof}{Proof of Theorem~\ref{thm:main}.}
First, we show that \CFE~is $K$-FH. To see this, note that the policy is fixed irrespective of the context in the learning phase and hence FH. In the exploitation phase it is FH with respect to any time in the exploitation phase since the exploitation phase uses a $K$-Lipschitz decision rule. Finally, it is also FH with respect to the Explore phase in the Exploit phase since decisions for each context only increase in going from Explore to Exploit. 

Next, we will establish an upper bound on the regret for a fixed model parameter $w$. For a fixed $T$, define $T'$ to be the minimum of $T$ and the (random) time at which the Explore phase ends, i.e.,
\begin{align}
&T'= \max\{1\leq t\leq T\mid\,  \nexists\, w'\in \cW, \,  \max_{s< t}\Lambda_s(w',w'') \geq \log T \,\,\forall\,\, w''\neq w' \}.
\end{align}
Note that if the Explore phase doesn't end before time $T$, then $T'=T$.
Next, consider a new coupled stochastic process $(\bar\Lambda_t(w',w''))_{t\in\mathbb{N}}$ for each $w',w''\in \cW$ that is identical to $(\Lambda_t(w',w''))_{t\leq T}$ upto time $T'$ and then it continues as if the Explore phase did not end (i.e., the allocations are $\epsilon_T$ for all contexts forever). Define $T^{w'}_w$ to be the minimum of T and the random time at which $w$ gets ``distinguished'' from $w'$ in this new stochastic process, i.e., when the log-likelihood ratio of $w$ relative to $w'$ based on the observations crosses the threshold of $\log T$. Formally, 
$$T^{w'}_{w} = \max\{1\leq t\leq T\mid \, \max_{s< t} \bar\Lambda_{s}(w,w') <\log T \}.$$
Define $T_w = \max_{w'\neq w} T^{w'}_{w}$, i.e., $T_w$ is the minimum of T and the random time at which $w$ gets ``distinguished'' from {\it all} $w'\neq w$. 
Then, clearly, $\mathbb{E}(T')\leq \mathbb{E}(T_w)$. Thus $\mathbb{E}(T') \leq \mathbb{E}(\max_{w'\neq w} T^{w'}_{w})\leq \sum_{w'\neq w} \mathbb{E}(T^{w'}_{w}).$
Now, it is easy to show that the process 
$$\bigg(\bar\Lambda_{t}(w,w')-t\mathbb{E}_\cD(\KL(w,w'|\epsilon,c))\bigg)_{t\in\mathbb{N}}$$
is a martingale and $T^{w'}_{w}$ is a bounded stopping time \citep{ross1996stochastic}. Thus, by the optional stopping theorem,
$$\mathbb{E}(\bar\Lambda_{T^{w'}_{w}}(w,w')-T^{w'}_{w}\mathbb{E}_\cD(\KL(w,w'|\epsilon,c)) = 0,$$
that is,
\begin{align*}
\mathbb{E}(T^{w'}_{w}) &= \frac{\mathbb{E}(\bar\Lambda_{T^{w'}_{w}}(w,w'))}{\mathbb{E}_\cD(\KL(w,w'|\epsilon,c))}\\
&\leq  \frac{\log T + \mathbb{E}(\log \frac{p(U_{T^{w'}_{w}}\mid \epsilon,c_{T^{w'}_{w}},w)}{p(U_{T^{w'}_{w}}\mid \epsilon,c_{T^{w'}_{w}},w')})}{L(\epsilon)}~~~\textrm{ (by the definition of }T^{w'}_{w})\\
&\leq \frac{\log T + \mathbb{E}(\log \frac{1}{b\epsilon^H})}{L(\epsilon)}~~~\textrm{(for small enough }\epsilon \textrm{ for some $b>0$, since $D(x)= \Omega(x^H)$)}\\
&\leq \frac{\log T + b' -H\log \epsilon}{L(\epsilon)}~~~\textrm{ (for small enough }\epsilon),
\end{align*}
where $b' = -\log b$. Hence, finally,
\begin{align}
\mathbb{E}(T')\leq (|\cW| -1|)\frac{\log T + b' -H\log \epsilon}{L(\epsilon)}
\end{align}
for small enough $\epsilon$.
Next, suppose that $w^*$ is the parameter value that \CFE~learns at the end of Explore. If we denote $P^{err}_w$ to be the probability that $w^*\neq w$ when the true parameter is $w$, then we can show that $P^{err}_w\leq 1/T$. This follows from the fact that under the true $w$, the sequence of likelihood ratios $(\exp(-\Lambda_t(w,w')))_{0\leq t\leq T}$ is a martingale and hence by Doob's martingale inequality \citep{ross1996stochastic},
$\mathbb{P}_{w}(\max_{t\leq T} \Lambda_t(w',w)> \log T)= \mathbb{P}_{w}(\max_{t\leq T} \exp(-\Lambda_t(w,w'))> T)\leq 1/T$
for any $w'\neq w$. 

Finally, if we denote $U^{\epsilon}_K(w)$ to be the optimal value of the optimization problem \eqref{opt2} when $w^*=w$, then we can show that $U^{\epsilon}_K(w)\geq U_K(w) -\epsilon R$. This is because we can take the optimal $K$-Lipschitz decision rule $\phi$ in $\cX=[0,1]$ that attains utility $U_K(w)$, and we can define a new decision rule $\phi'$ such that $\phi'(c) \triangleq \phi(c)$ if $\phi(c)\geq \epsilon$ and $\phi'(c) \triangleq \epsilon$ otherwise. It is easy to verify that this decision rule is $K$-Lipschitz and all the decisions are in $\cX_{\epsilon}$; hence it is feasible in problem \eqref{opt2}. Clearly, the expected utility of this decision rule is at least $U_K(w) - \epsilon R$ because of our assumption that $\bar{u}(x,c,w)$ is Lipschitz continuous in $x$ for each $c$ and $w$, with a Lipschitz constant $R$. 

Thus, we finally have that for a fixed model parameter $w$, the following upper bound holds for the total regret under \CFE:
\begin{align*}
\Reg^T_{K\textup{-FT}}(w, \textup{\CFE})&\leq\underbrace{R\mathbb{E}(T')}_\text{(a)} + \underbrace{P_w(w^*\neq w)(T-\mathbb{E}[T'\mid w^*\neq w])R}_\text{(b)} \\
&~+ \underbrace{P_w(w^*= w)(T-\mathbb{E}[T'\mid w^*= w])R\epsilon}_\text{(c)}\\
&\leq R\mathbb{E}(T') + \frac{1}{T}(TR) + TR\epsilon\\
&= R\mathbb{E}(T') + R+ TR\epsilon\\
&\leq R(|\cW| -1|)\frac{\log T + b' -H\log \epsilon}{L(\epsilon)}+ R+ TR\epsilon
\end{align*}
for some small enough $\epsilon$, where (a) is the upper bound on the regret incurred during Explore, (b) is the upper bound on the regret incurred during Exploit on the event $\{w^*\neq w\}$, and (c) is the upper bound on the regret incurred during Exploit on the event $\{w^*= w\}$.
Now since $L(x) =\Omega(x^M)$ as $x \rightarrow 0$, there exists $a>0$ such that $L(x) \geq ax^M$. Hence, we have
\begin{align*}
&\Reg^T_{K\textup{-FT}}(w, \textup{\CFE})\leq R(|\cW| -1|)\frac{\log T + b' -H\log \epsilon}{a\epsilon^M}+ R+ TR\epsilon
\end{align*}
for some small enough $\epsilon$. Plugging in $\epsilon = 1/T^{\frac{1}{M+1}}$, we obtain that for a large enough $T$,
\begin{align*}
&\Reg^T_{K\textup{-FT}}(w, \textup{\CFE}_T)\leq R(|\cW| -1|)T^{\frac{M}{M+1}}\frac{\log T + b' +\frac{H}{M+1}\log T}{a}+ R+ T^{\frac{M}{M+1}}R.
\end{align*}
Thus,
$$\limsup_{T\rightarrow\infty}\frac{\Reg^T_{K\textup{-FT}}(w, \textup{\CFE}_T)}{T^{\frac{M}{M+1}}\log T} \leq  \frac{R(|\cW| -1|)(1 +\frac{H}{M+1})}{a}.$$
This proves the statement of the theorem.
\hfill $\square$
\end{proof}


\subsection{Lower bounds on regret under FH}
We now construct examples that demonstrate that the regret under \CFE~has essentially order-optimal dependence on $T$.
The proofs of these bounds rely on the following high-level argument. Suppose there are two hypotheses about the state of the world: one in which high decisions are optimal and the other in which low decisions are optimal. If a policy raises decisions too quickly (and importantly, irrevocably, because of the FH constraint) without obtaining sufficiently strong empirical evidence for the hypothesis that it is indeed in a situation where high decisions are optimal, then (a ``change of measure" argument shows that) there is a good chance that the hypothesis is incorrect, and because of the FH constraint, the policy will incur a high regret. The fact that a policy is {\it good}, i.e, it does not incur high regret irrespective of the true hypothesis, implies an upper bound on the probability of this policy raising decisions too quickly without obtaining sufficient empirical evidence to justify it. This means that any good policy must obtain sufficient evidence before raising decisions. But {\it it takes time} to obtain this evidence, which means that it is {\it inevitable} that the policy will incur some regret in the case that high decisions are optimal. In the case where $\lim_{x\rightarrow 0} L(x) = 0$, the situation is worse: while the decisions are low, it takes {\it even longer} to obtain sufficient empirical evidence to justify higher decisions, thus increasing the amount of inevitable regret. 

\subsubsection{Case: $L(x)=\Theta(x^M)$.}
First, we construct an instance with $L(x) = \Theta(x^M)$ as $x\rightarrow 0$ where a regret of $\omega(T^{\frac{M}{M+1}-\beta})$ is inevitable for any $\beta>0$. Suppose that $\cC=\{0\}$, i.e., there is only one context. Hence the choice of the Lipschitz constant $K$ is immaterial, and the FH constraint simply means that the decisions must be non-decreasing over time. Let $\cW=\{A,B\}$ and let the distribution of utility given the decision $x$ and parameter $w$ be defined as:
$$
U \overset{(w=A)}{=}
\left\{
	\begin{array}{llll}
		&x^{1-M/2}  &\mbox{ w.p. } 0.5(1+x^{M/2}),& \\		
		&-x^{1-M/2} &\mbox{ w.p. }0.5(1-x^{M/2}), 
	\end{array}
\right.
$$ and 
\begin{equation}
U \overset{(w=B)}{=}
\left\{
	\begin{array}{llll}
		&x^{1-M/2} &\mbox{ w.p. } 0.5(1-x^{M/2}),& \\ &-x^{1-M/2} &\mbox{ w.p. } 0.5(1+x^{M/2}).
	\end{array}
\right.\label{inst-dist}
\end{equation}

\vspace{0.1in}
It is convenient to define $p(u\mid x,w)$ as the probability of observing $u\in  \{x^{1-M/2},-x^{1-M/2} \}$ for a fixed $w\in\cW$, and $x\in\cX$.\footnote{Here $p(x^{1-M/2} \mid x, A) = p(-x^{1-M/2} \mid x, B)= 0.5(1+x^{\frac{M}{2}})$, $p(-x^{1-M/2} \mid x, A)  = p(x^{1-M/2} \mid x, B)= 0.5(1-x^{\frac{M}{2}})$ and 0 otherwise.}


It is easy to see that the mean utilities are: $\bar{u}(x,A) = x$ and $\bar{u}(x,B)= -x$ (where we have suppressed the dependence on the context).  Clearly, the optimal decision is $x=1$ if $w=A$ and $x=0$ if $w=B$. 
For any $p,q \in (0,1)$, let $D_{\KL}(p\|q)$ denote the the K-L divergence of a Bernoulli($p$) distribution relative to a Bernoulli($q$) distribution, i.e., $D_{\KL}(p\|q) = p\log (p/q) +(1-p)\log ((1-p)/(1-q))$. Recall that $L(x)$ was defined to be the minimum expected KL-divergence over any two parameters $w,w^\prime$ (see \eqref{eq:Lx}). 
 For this instance, note that $D_{\KL}(0.5(1+x^{M/2})\|0.5(1-x^{M/2}) = D_{\KL}(0.5(1-x^{M/2})\|0.5(1+x^{M/2}))=  x^{M/2}\log\frac{1+x^{M/2}}{1-x^{M/2}}$, and hence 
\begin{align}
L(x) &= \min(D_{\KL}(0.5(1+x^{M/2})\|0.5(1-x^{M/2})), D_{\KL}(0.5(1-x^{M/2})\|0.5(1+x^{M/2})))\nonumber \\ 
&= x^{M/2}\log\frac{1+x^{M/2}}{1-x^{M/2}}.
\end{align}

Note that $L(x)$ is increasing in $x$ and one can show that $L(x)=\Theta(x^M)$ as $x\rightarrow 0$. This means that for $x$ small enough and for some $0<v<V$, we have 
\begin{align}
vx^M\leq L(x)\leq Vx^M.\label{asym}
\end{align}
\begin{proposition}\label{prop:lb1}
Consider the instance defined above. Suppose there is a sequence of $K$\textup{-FH} policies $(\psi_T)_{T\in\mathbb{N}}$ that satisfy,
 $$\Reg^T_{K\textup{-FH}}(w,\psi_T)= \textup{o}(T^\alpha)$$ 
as $T\rightarrow\infty$ for each $\alpha>\frac{M}{M+1}$ and each $w\in\{A,B\}$. Then 
$$\Reg^T_{K\textup{-FH}}(A,\psi_T)= \omega(T^\beta)$$
as $T\rightarrow\infty$ for each $\beta<\frac{M}{M+1}$.
\end{proposition}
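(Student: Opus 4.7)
The plan is to argue by contradiction using a Bretagnolle--Huber (B--H) change-of-measure argument, exploiting that FH with a single context makes $(x_t)$ non-decreasing and that each $x_t$ is $\mathcal{F}_{t-1}$-measurable. With $\bar u(x,A)=x$ and $\bar u(x,B)=-x$, one has $U_K(A)=1$, $U_K(B)=0$, $\Reg^T_{K\textup{-FH}}(A,\psi) = \mathbb{E}_A[\sum_{t=1}^T(1-x_t)]$, and $\Reg^T_{K\textup{-FH}}(B,\psi) = \mathbb{E}_B[\sum_{t=1}^T x_t]$. For $y\in(0,1)$ let $\tau_y := \min\{t\leq T: x_t\geq y\}$ (set to $T+1$ if no crossing). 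Monotonicity gives $\Reg^T_{K\textup{-FH}}(A,\psi) \geq (1-y)(\mathbb{E}_A[\tau_y \wedge (T+1)]-1)$ and $\Reg^T_{K\textup{-FH}}(B,\psi) \geq y\,\mathbb{E}_B[(T+1-\tau_y)^+]$, so Markov yields
\[
\mathbb{P}_A(\tau_y > T/2) \leq \frac{2\Reg^T_{K\textup{-FH}}(A,\psi)/(1-y)+2}{T}, \qquad \mathbb{P}_B(\tau_y \leq T/2) \leq \frac{2\Reg^T_{K\textup{-FH}}(B,\psi)}{yT}.
\]

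Suppose the conclusion fails: along some subsequence $T_n \to \infty$ there exists $\beta<M/(M+1)$ and $c>0$ with $\Reg^{T_n}_{K\textup{-FH}}(A,\psi_{T_n}) \leq c T_n^\beta$. Since $\beta<M/(M+1)$ implies $1-\beta/M > M/(M+1)$, I can pick $\alpha \in (M/(M+1),\,1-\beta/M)$ (for which the hypothesis gives $\Reg^{T_n}_{K\textup{-FH}}(B,\psi_{T_n}) = o(T_n^\alpha)$) and then pick $a \in (\beta/M,\,1-\alpha)$; this interval is nonempty by construction. Set $y_n := T_n^{-a}$ and $E_n := \{\tau_{y_n} > T_n/2\}$. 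The Markov bounds above then give $\mathbb{P}_A(E_n) \to 0$ and $\mathbb{P}_B(E_n^c) \to 0$, so $\mathbb{P}_A(E_n)+\mathbb{P}_B(E_n^c) \to 0$.

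The decisive step is to control the KL on a carefully stopped $\sigma$-algebra. Because every $x_t$ is $\mathcal{F}_{t-1}$-measurable, $\tau_{y_n}$ is ``predictable'' in the sense that $\{\tau_{y_n}\leq t+1\}\in\mathcal{F}_t$, and so $\tau_n^\circ := (\tau_{y_n}-1)\wedge \lfloor T_n/2\rfloor$ is a bona fide stopping time for $(\mathcal{F}_t)$. By construction $x_t < y_n$ for every $t\leq\tau_n^\circ$, and $E_n=\{\tau_n^\circ=\lfloor T_n/2\rfloor\}\in\mathcal{F}_{\tau_n^\circ}$. Applying B--H to the restricted measures,
\[
\mathbb{P}_A(E_n) + \mathbb{P}_B(E_n^c) \;\geq\; \tfrac{1}{2}\exp\!\Bigl(-D_{\KL}\bigl(\mathbb{P}_A|_{\mathcal{F}_{\tau_n^\circ}}\,\big\|\,\mathbb{P}_B|_{\mathcal{F}_{\tau_n^\circ}}\bigr)\Bigr).
\]
By the stopped chain rule, the restricted KL equals $\mathbb{E}_A\bigl[\sum_{t=1}^{\tau_n^\circ} L(x_t)\bigr] \leq V y_n^M\,\mathbb{E}_A[\tau_n^\circ]$ since each $x_t<y_n$ and $L(x)\leq Vx^M$ by \eqref{asym}. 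Combined with $\mathbb{E}_A[\tau_n^\circ]\leq cT_n^\beta/(1-y_n)+1$,
\[
D_{\KL}\bigl(\mathbb{P}_A|_{\mathcal{F}_{\tau_n^\circ}}\,\big\|\,\mathbb{P}_B|_{\mathcal{F}_{\tau_n^\circ}}\bigr) \;\leq\; \frac{Vc T_n^{\beta-aM}}{1-y_n} + V T_n^{-aM} \;\longrightarrow\; 0
\]
since $aM>\beta$, so the right-hand side of B--H converges to $\tfrac{1}{2}$, contradicting $\mathbb{P}_A(E_n)+\mathbb{P}_B(E_n^c)\to 0$.

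The main obstacle is keeping the stopped KL bounded. A naive stopping at $\tau_{y_n}\wedge (T_n/2)$ would incorporate one observation taken at the decision $x_{\tau_{y_n}}\in[y_n,1]$; however $L(x)$ in this instance blows up as $x\to1$ (the Bernoulli parameter hits $1$), so that single term could destroy the bound. The predictability of $(x_t)$ lets me instead stop at $\tau_n^\circ=(\tau_{y_n}-1)\wedge\lfloor T_n/2\rfloor$, strictly \emph{before} the crossing, which guarantees every $x_t$ appearing in the KL sum is below $y_n$ and therefore contributes at most $O(y_n^M)=O(T_n^{-aM})$. The feasibility of $a\in(\beta/M,1-\alpha)$ is nonempty precisely when $\beta<M/(M+1)$, which is exactly what pins the threshold $M/(M+1)$ and makes this exponent sharp.
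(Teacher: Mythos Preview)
Your argument is correct. It follows a genuinely different route from the paper's proof, so a brief comparison is in order.

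The paper argues directly rather than by contradiction. It fixes a threshold $\tau(T)=T^{\frac{M}{M+1}+\gamma-1}$ and defines the event $C_T$ that the crossing time $k_T$ is short \emph{and} the log-likelihood ratio $\Lambda_{k_T}$ is small. A one-line change of measure on $C_T$ together with the $o(T^\alpha)$ hypothesis for $w=B$ forces $\mathbb{P}_A(C_T)=o(1)$. The second half then shows, via Kolmogorov's maximal inequality applied to the centered martingale $\overline{\Lambda}_t$ (with a careful coupling to a modified policy capped at $\tau(T)$), that $\Lambda_{k_T}$ cannot be large unless $k_T$ is large. Combining the two halves yields $\mathbb{P}_A\bigl(k_T>\frac{\gamma\log T}{2L(\tau(T))}\bigr)=1-o(1)$, from which the lower bound on $\Reg_A$ follows directly.

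Your approach packages the change of measure into a single Bretagnolle--Huber application on the stopped $\sigma$-field $\mathcal{F}_{\tau_n^\circ}$, where $\tau_n^\circ=(\tau_{y_n}-1)\wedge\lfloor T_n/2\rfloor$. The predictability observation --- that $x_t$ is $\mathcal{F}_{t-1}$-measurable so $\tau_{y_n}-1$ is a genuine stopping time --- is the key device: it ensures every term in the stopped KL sum sits at $x_t<y_n$, so the bound $L(x_t)\le Vy_n^M$ applies uniformly and the problematic blow-up of $L(x)$ near $x=1$ never enters. This replaces the paper's martingale concentration step entirely. The trade-off is that your argument is by contradiction along a subsequence and does not produce an explicit leading constant, whereas the paper's direct argument in principle could be tightened to one; on the other hand, your proof is shorter and avoids the auxiliary coupled policy $\psi'_T$ the paper needs to justify applying Kolmogorov's inequality.
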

\begin{proof}{Proof.}
For any underlying $w$, the policy $\psi_T$ along with the distributions of utilities conditioned on decisions given in \eqref{inst-dist}, induce a probability distribution on the sequence of decision and utility pairs $(X_1,U_1,X_2,U_2,\cdots, X_T,U_T)$. Let $\mathbb{P}_w$ and $\mathbb{E}_w$ denote the probabilities of events and expectations, respectively, under $w=A$ and $w=B$. Here $X_t$ is measurable with respect to the $\sigma$-algebra generated by $(X_1,U_1,X_2,U_2,\cdots, X_{t-1},U_{t-1})$, and $U_t$ is conditionally independent of the past given $X_t$. This, in particular, means that for any function $f:\mathbb{R}\times \cX\rightarrow\mathbb{R}$, and each $w\in\{A,B\}$, 
\begin{equation} \mathbb{E}_w[f(U_t, X_t) \mid X_1,U_1,X_2,U_2,\cdots, X_{t}] = \mathbb{E}_w[f(U_t,X_t) \mid X_{t}].\label{indep}
\end{equation}
Define the (random) sequence of empirical log-likelihood ratios of $w=A$ relative to $w=B$, $(\Lambda_t)_{t\leq T}$, where
\begin{equation}
\Lambda_t = \sum_{s=1}^{t}\log \frac{p(U_s\mid X_s,A)}{p(U_s\mid X_s,B)}.
\end{equation}
Also, we will be using a ``centered" sequence $(\overline{\Lambda}_t)_{0\leq t\leq T}$, where $\overline{\Lambda}_t=\Lambda_t -\mu_t$ and $(\mu_t)_{0\leq t\leq T}$ is the mean process defined as $\mu_0 =0$ and for any $1\leq t\leq T$, 
\begin{equation}
\mu_t = \sum_{s=1}^{t}\mathbb{E}_A[\log \frac{p(U_s\mid X_s,1)}{p(U_s\mid X_s,0)}\mid X_s],\label{eq:mu}
\end{equation}

Using \eqref{indep}, it is easy to see that $(\overline{\Lambda}_t)_{0\leq t\leq T}$ is a martingale. 

The proof is now divided into two parts.

\noindent \textbf{Part A: One cannot raise decisions too quickly without obtaining sufficient empirical evidence to justify it.}

Let us call a policy $\psi_T$ {\it good} if $\Reg^T_{K\textup{-FH}}(B,\psi_T)= \textup{o}(T^\alpha)$ as $T\rightarrow\infty$ for each $\alpha>\frac{M}{M+1}$. In this part, we will define a threshold $\tau(T) \in \cX = [0,1]$. When $w=A$, we will show that under any good policy $\psi_T$, the probability of raising the decision beyond $\tau(T)$ despite the fact that there is insufficient evidence for $w=A$ (i.e., log-likelihood ratio of $A$ relative to $B$ is low) is $\textup{o}(1)$.

Let the threshold be $\tau(T) = T^{\frac{M}{M+1} +\gamma -1}$ for any $\gamma \in (0,1/(M+1))$. Note that $\tau(T)=\textup{o}(1)$ as $T \rightarrow \infty$. Define $k_T$ to be the first time that the policy raises decisions higher than $\tau(T)$ ($k_T = T$ if it doesn't), i.e., $k_T = \max\{1\leq k\leq T \textrm{ s.t. } X_k < \tau(T)\}$. Define the event:
\begin{align*}
&C_T \triangleq\{k_T \leq \frac{\gamma\log T}{2L(\tau(T))} \textrm{ and }\Lambda_{k_T} \leq \frac{3\gamma}{4}\log T\}.
 \end{align*}
Informally, this event says that the decision crosses $\tau(T)$ before time $\frac{\gamma\log T}{2L(\tau(T))}$ and at the time of crossing, the empirical log-likelihood ratio of $A$ relative to $B$ is low, i.e., it is below $\frac{3\gamma}{4}\log T$.  
Then we have
\begin{equation}
\mathbb{P}_B(C_T) =\mathbb{E}_B(\mathbbm{1}_{C_T}) \overset{(a)}{=} \mathbb{E}_A(\mathbbm{1}_{C_T}\exp(-\Lambda_{k_T})) \geq \mathbb{P}_A(C_T)T^{-3\gamma/4}.\label{com1}
\end{equation}
Here, $(a)$ is the standard change of measure identity. Hence,
\begin{align*}
\Reg^T_{K\textup{-FH}}(B,\psi_T)&\overset{(a)}{\geq} \mathbb{P}_B(C_T)\tau(T)\bigg(T - k_T\bigg)\\
&\overset{(b)}{\geq} \mathbb{P}_B(C_T)\tau(T)\bigg(T - \frac{\gamma\log T}{2L(\tau(T))}\bigg)\\
&\overset{(c)}{\geq} \mathbb{P}_A(C_T)T^{-3\gamma/4}\tau(T)\bigg(T - \frac{\gamma\log T}{2L(\tau(T))}\bigg)\\
&\overset{(d)}{\geq} \mathbb{P}_A(C_T)T^{-3\gamma/4}\tau(T)\bigg(T - \frac{\gamma\log T}{2v\,\,\tau(T)^{M}}\bigg) 
\end{align*}
for a large enough $T$. Here $(a)$ follows from the fact that when $w=B$, then on the event $C_T$, the policy will incur a regret of at least $\tau(T)$ in each time period after time $k_T$. (b) is simply using the definition of $C_T$, $(c)$ follows from \eqref{com1}, and $(d)$ follows from \eqref{asym}.  Now, since, 
\begin{align*}
\frac{1}{\tau(T)^{M}} &= T^{(-M^2/M+1) -M\gamma +M}\leq T^{(-M^2/M+1) +M} = T^{\frac{M}{M+1}},
\end{align*}
we have that $\frac{\gamma\log T}{2v\tau(T)^{M}}= \textup{o}(T)$.
Thus, we finally have,
\begin{align*}
\Reg^T_{K\textup{-FH}}(B,\psi_T)&\geq \mathbb{P}_A(C_T)T^{-3\gamma/4+(\frac{M}{M+1}) +\gamma -1}(T-\textup{o}(T))\\
&= \mathbb{P}_A(C_T)T^{\gamma/4+(\frac{M}{M+1})}(1-\textup{o}(1)). 
\end{align*}
Since $\Reg^T_{K\textup{-FH}}(B,\psi_T)= \textup{o}(T^\alpha)$ for each $\alpha>\frac{M}{M+1}$, we have $\mathbb{P}_A(C_T) = \textup{o}(1)$.
Thus, for $\overline{C}_T$, the complement of the event $C_T$, we get:
\begin{align}
\mathbb{P}_A(\overline{C}_T)&= \mathbb{P}_A(\underbrace{k_T> \frac{\gamma\log T}{2L(\tau(T))}}_{(\star)}\textrm{ or } \underbrace{\Lambda_{k_T} > \frac{3\gamma}{4}\log T}_{(\dagger)}) = 1-\textup{o}(1).\label{eq:part-A}
\end{align}
This shows that when $w=A$, any good policy must wait for a sufficiently long time before raising the decisions beyond $\tau(T)$ ($\star$), {\it or} there must be sufficient empirical evidence for $w=A$ at the time when the decision is raised beyond $\tau(T)$ $(\dagger)$. We next show that it is inevitable that one has to wait sufficiently long before raising the decision beyond $\tau(T)$. This will follow from the fact that it takes time for the log-likelihood ratio of $w=A$ relative to $w=B$ to grow sufficiently. \\

\noindent \textbf{Part B: It takes time (and hence regret) to gather sufficient empirical evidence.}\\

We now show that $\displaystyle \mathbb{P}_A(k_T\leq \frac{\gamma\log T}{2L(\tau(T))} \textrm{ and }\Lambda_{k_T} > \frac{3\gamma}{4}\log T) = \textup{o}(1).$ Denote $z(T) = \lfloor{\frac{\gamma\log T}{2L(\tau(T))}}\rfloor$. Then we have,
\begin{align*}
\mathbb{P}_A(k_T\leq \frac{\gamma\log T}{2L(\tau(T))} \textrm{ and }\Lambda_{k_T} > \frac{3\gamma}{4}\log T)&= \mathbb{P}_A(k_T\leq z(T) \textrm{ and }\Lambda_{\min(k_T,z(T))} > \frac{3\gamma}{4}\log T)\\
&\leq \mathbb{P}_A(\Lambda_{\min(k_T,z(T))} > \frac{3\gamma}{4}\log T)\\
&= \mathbb{P}_A(\overline{\Lambda}_{\min(k_T,z(T))} > \frac{3\gamma}{4}\log T- \mu_{\min(k_T,z(T))}  )\\
&\overset{(a)}{\leq} \mathbb{P}_A(\overline{\Lambda}_{\min(k_T,z(T))} > \frac{3\gamma}{4}\log T- z(T)L(\tau(T)) )\\
&\leq \mathbb{P}_A(\overline{\Lambda}_{\min(k_T,z(T))} > \frac{\gamma}{4}\log T ).
\end{align*}
Here, (a) follows from the definition of $\mu_t$ in equation $\eqref{eq:mu}$ and from the fact that 
$$\mathbb{E}_A[\log \frac{p(U_s\mid X_s,1)}{p(U_s\mid X_s,0)}\mid X_s]= L(X_s)\leq L(\tau(T))$$
almost surely for all $s\leq k_T$, since $X_s< \tau(T)$ for all $s\leq k_T$, and $L(\cdot)$ is increasing.

We now define a new policy $\psi'_T$ with associated random variables that will be differentiated from the corresponding random variables under $\psi_T$ by adding a ``prime'' superscript, e.g., $X\rightarrow X'$. This new policy follows the prescriptions of $\psi_T$ until one of the two events happen:
\begin{enumerate}
\item  $\overline{\Lambda}'_{t} > \frac{\gamma}{4}\log T$, in which case it increases decision to $\tau(T)$ and chooses $\tau(T)$ until the end of the horizon.
\item  $\psi_T$ prescribes raising the decision from some $x' < \tau(T)$ to some $x''\geq \tau(T)$, in which case it continues to play $x'$ until either condition (1) is satisfied, or until the end of the horizon. 
\end{enumerate}
Effectively, this policy raises decision to $\tau(T)$ exactly when $\Lambda'_{t} > \frac{\gamma}{4}\log T$ and then fixes the decisions at $\tau(T)$. Define the random variable \begin{align}
k'_T &\triangleq \max\{k\leq T \textrm{ s.t. } X'_k < \tau(T)\}=  \max\{k\leq T \textrm{ s.t. } \sup_{s< k}\overline{\Lambda}'_{s} \leq \frac{\gamma}{4}\log T\}.
\end{align}
Suppose that the sample paths under the two policies are coupled until the point that these two policies have identical prescriptions. 
Then, by the construction of $\psi'_T$, it is clear that on each sample path that $\overline{\Lambda}_{\min(k_T,z(T))} > \frac{\gamma}{4}\log T$ occurs, $\overline{\Lambda}'_{\min(k'_T,z(T))} > \frac{\gamma}{4}\log T$ occurs as well. 
Thus, 
\begin{align*}
\mathbb{P}_A(\overline{\Lambda}_{\min(k_T,z(T))} > \frac{\gamma}{4}\log T ) &\leq \mathbb{P}_A(\overline{\Lambda}'_{\min(k'_T,z(T))}  > \frac{\gamma}{4}\log T )\\
&= \mathbb{P}_A(k'_T \leq z(T)) \\
&\leq \mathbb{P}_A(\sup_{s\leq z(T)}\overline{\Lambda}'_{s}>  \frac{\gamma}{4}\log T)\\
&\leq \frac{\textup{var}(\overline{\Lambda}'_{z(T)})}{(\log T)^2} \textrm{ (by Kolmogorov's maximal inequality \citep{ross1996stochastic})} \\
&= \frac{\sum_{t=1}^{z(T)}\textup{var}\big(\log\frac{p(U'_t|X'_t,w)}{p(U'_t\mid X'_t,w')}-\mathbb{E}_A[\log\frac{p(U'_t|X'_t,w)}{p(U'_t\mid X'_t,w')}\mid X'_t]\big)}{(\log T)^2}\\
&= \frac{\sum_{t=1}^{z(T)}\mathbb{E}_A\big(\textup{var}\big(\log\frac{p(U'_t|X'_t,w)}{p(U'_t\mid X'_t,w')}\mid X'_t\big)\big)}{(\log T)^2}
\end{align*}
The random variable $\log\frac{p(U'_t\mid X'_t,w)}{p(U'_t\mid X'_t,w')}$ lies in $[-\log\frac{1+(X'_t)^{M/2}}{1-(X'_t)^{M/2}}, \log\frac{1+(X'_t)^{M/2}}{1-(X'_t)^{M/2}}]$. Under policy $\psi'_T$, $X'_t\leq \tau(T)$ almost surely for all $t\leq T$. Thus the range of $\log\frac{p(U'_t\mid X'_t,w)}{p(U'_t\mid X'_t,w')}$ is at most $[-\log\frac{1+\tau(T)^{M/2}}{1-\tau(T)^{M/2}}, \log\frac{1+\tau(T)^{M/2}}{1-\tau(T)^{M/2}}]$. Hence, by Popoviciou's inequality for the variances,\footnote{If a random variable takes values in $[a,b]$, then its variance is at most $(b-a)^2/4$.}
\begin{align*}
\textup{var}\big(\log\frac{p(U'_t\mid X'_t,w)}{p(U'_t\mid X'_t,w')}\mid X'_t\big)&\leq (\log\frac{1+\tau(T)^{M/2}}{1-\tau(T)^{M/2}})^2 = \textup{O}(\tau(T)^M)
\end{align*}
as $\tau(T)\rightarrow 0$. Hence, we finally have 
\begin{align*}
\mathbb{P}_A(\overline{\Lambda}'_{\min(k_T,z(T))}> \frac{\gamma}{4}\log T )\leq \frac{z(T)\textup{O}(\tau(T)^M)}{(\log T)^2}\leq \frac{\gamma\textup{O}(\tau(T)^M)}{L(\tau(T))\log T} \overset{(a)}{\leq} \frac{\gamma\textup{O}(\tau(T)^M)}{v\tau(T)^M \log T} = \textup{o}(1), 
\end{align*}
where (a) follows from \eqref{asym}. 
Thus, to reiterate, we have shown that
$$\mathbb{P}_A(k_T\leq \frac{\gamma\log T}{2L(\tau(T))} \textrm{ and }\Lambda_{k_T} > \frac{3\gamma}{4}\log T) = \textup{o}(1).$$
Coupled with \eqref{eq:part-A}, this implies that $\mathbb{P}_A(k_T>\frac{\gamma\log T}{2L(\tau(T))})= 1-\textup{o}(1).$
Hence,
\begin{align*}
\Reg^T_{K\textup{-FH}}(1,\psi_T)&\geq (1-\textup{o}(1))(1-\tau(T))\frac{\gamma\log T}{2L(\tau(T))}\\
&\geq (1-\textup{o}(1))(1-\textup{o}(1))\frac{\gamma\log T}{2V\tau(T)^{M}}\textrm{ (for a large enough $T$)}\\
&\geq  (1-\textup{o}(1)) \frac{\gamma}{2V} T^{\frac{M}{M+1} -M\gamma}\log T \textrm{ (for a large enough $T$)}.
\end{align*}
Thus $\Reg^T_{K\textup{-FH}}(1,\psi_T) = \omega(  T^{\frac{M}{M+1} -M\gamma})$ for every $\gamma>0$, hence proving the claim.\hfill $\square$
\end{proof}

\paragraph{Remark.} If there was no FH constraint, a cumulative regret of at most 1 can be achieved in this instance: one can simply choose $x_1=1$, which immediately reveals whether $w=1$ (if $U_1 = 1$) or $w=0$ if ($U_1 = -1$). This demonstrates the stark impact of the FH constraint on regret.

This lower bound, however, does not resolve whether in the setting where $L(x) = \Theta(1)$, a regret of $\textup{O}(\log T)$ is necessary, as our upper bound suggests. We show next that this is indeed the case. 


\subsubsection{Case: $L(x) =\Theta(1)$.}
We now construct an instance with $L(x) =\Theta(1)$ as $x\rightarrow 0$ where an expected regret of $\Omega(\log T)$ is inevitable. Suppose that $\cC=\{0\}$ and let $\cW=\{A,B\}$. Again, since there is only one context, the choice of the Lipschitz constant $K$ is immaterial, and the FH constraint simply means that the decisions must be non-decreasing over time. The utility at time $t$ given the decision $x_t$ and parameter $w$ is given by $U_t =x_tF_t$ where $F_t$ is i.i.d. across time, distributed as:\\

$\textcolor{white}{ccccccccc} F_t \overset{(w=A)}{=}
\left\{
	\begin{array}{llll}
		&1  &\mbox{ w.p. } 0.75,& \\		&-1 &\mbox{ w.p. }0.25&
	\end{array}
\right. 
\mbox{ and     }F_t \overset{(w=B)}{=}
\left\{
	\begin{array}{llll}
		&1&\mbox{ w.p. } 0.25,& \\ 
		&-1&\mbox{ w.p. } 0.75&
	\end{array}
\right..$\\

Clearly, the optimal decision is $x=1$ if $w=A$ and $x=0$ if $w=B$.

\begin{proposition}
Consider the instance defined above. Suppose there is a sequence of $K$\textup{-FH} policies $(\psi_T)_{T\in\mathbb{N}}$ that satisfy,
 $$\Reg^T_{K\textup{-FH}}(w,\psi_T)= \textup{o}(T^\alpha)$$ 
as $T\rightarrow \infty$ for each $\alpha>0$ and each $w\in\{A,B\}$. Then as $T\rightarrow \infty$, 
$$\Reg^T_{K\textup{-FH}}(A,\psi_T)\geq \Omega(\log T).$$

\end{proposition}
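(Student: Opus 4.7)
The plan is to adapt the two-part structure of the proof of Proposition~\ref{prop:lb1} to the case $M=0$, exploiting the simplifications that $L(x) = L \triangleq \frac{1}{2}\log 3$ is a positive constant for every $x>0$ and that the log-likelihood ratio increments $\log(p(U_t|X_t,A)/p(U_t|X_t,B))$ take values in $\{-\log 3,+\log 3\}$ with variance bounded by a constant uniformly in $X_t$. Because the variance no longer vanishes as $X_t\to 0$, there is no need to scale the threshold with $T$: I will work with a fixed constant $\tau\in(0,1)$ (say $\tau=1/2$) in place of the vanishing $\tau(T)$ used before, and take the associated information-theoretic timescale to be $z(T)=\lfloor \gamma\log T/(2L)\rfloor$ for a small constant $\gamma\in(0,1)$.

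For Part~A, I would define $k_T=\max\{k\leq T : X_k<\tau\}$ and the event $C_T=\{k_T\leq \gamma\log T/(2L)\text{ and }\Lambda_{k_T}\leq (3\gamma/4)\log T\}$. The change-of-measure identity $\mathbb{P}_B(C_T)=\mathbb{E}_A[\mathbbm{1}_{C_T}e^{-\Lambda_{k_T}}]\geq \mathbb{P}_A(C_T)T^{-3\gamma/4}$, together with the fact that under $w=B$ the FH constraint forces at least $\tau(T-k_T)$ regret after $k_T$, yields $\Reg^T_{K\textup{-FH}}(B,\psi_T)\geq \mathbb{P}_A(C_T)\tau T^{1-3\gamma/4}(1-\textup{o}(1))$. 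Applying the sub-polynomial regret hypothesis with $\alpha=\gamma/4$ then forces $\mathbb{P}_A(C_T)=\textup{o}(T^{\gamma-1})=\textup{o}(1)$, so $\mathbb{P}_A(\{k_T>\gamma\log T/(2L)\}\cup\{\Lambda_{k_T}>(3\gamma/4)\log T\})=1-\textup{o}(1)$.

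For Part~B, I would show that the second piece of this union is unlikely under $w=A$. After recentring $\overline{\Lambda}_t=\Lambda_t-\mu_t$ with $\mu_t=L\cdot|\{s\leq t:X_s>0\}|$, the process $(\overline{\Lambda}_t)$ is a martingale under $\mathbb{P}_A$, and the bound $\mu_{z(T)}\leq z(T)L\leq (\gamma/2)\log T$ reduces the question to controlling $\mathbb{P}_A(\overline{\Lambda}_{\min(k_T,z(T))}>(\gamma/4)\log T)$. I would then borrow the auxiliary-policy construction from Proposition~\ref{prop:lb1}: define $\psi'_T$ to follow $\psi_T$ but freeze decisions below $\tau$ whenever $\psi_T$ would raise them prematurely, and couple the two on sample paths until they first disagree. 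Because $X'_s\leq \tau$ almost surely under $\psi'_T$, the increments of $\overline{\Lambda}'$ have variance bounded by a constant $\sigma^2=\frac{3}{4}(\log 3)^2$, so Kolmogorov's maximal inequality yields $\mathbb{P}_A(\sup_{s\leq z(T)}\overline{\Lambda}'_s>(\gamma/4)\log T)\leq z(T)\sigma^2/((\gamma/4)\log T)^2=\textup{O}(1/\log T)=\textup{o}(1)$.

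Combining the two parts, $\mathbb{P}_A(k_T>\gamma\log T/(2L))=1-\textup{o}(1)$. Since the optimal decision under $w=A$ is $x=1$, each time step prior to $k_T$ (where $X_t<\tau$) contributes per-step regret of at least $1-\tau$, giving $\Reg^T_{K\textup{-FH}}(A,\psi_T)\geq (1-\tau)(\gamma/(2L))\log T\cdot(1-\textup{o}(1))=\Omega(\log T)$, as required. The step I expect to require the most care, exactly as in Proposition~\ref{prop:lb1}, is verifying that the coupling between $\psi_T$ and the auxiliary $\psi'_T$ faithfully transfers the variance bound and the maximal inequality back to the original process without losing mass on the event that $\psi_T$ tries to raise decisions early; everything else is a constant-variance, constant-$L$ rerun of the argument in which the polynomial scales in $T$ from the $M>0$ case collapse to the logarithmic scale $z(T)=\Theta(\log T)$.
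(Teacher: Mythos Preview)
Your proposal is correct and would yield the $\Omega(\log T)$ lower bound, but the route you take differs from the paper's in a couple of respects worth noting.

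First, the paper does \emph{not} keep a fixed spatial threshold; it uses a vanishing one, $\tau(T)=T^{-\gamma/4}$, together with time and log-likelihood cutoffs $(1-\gamma)\log T/D_{\KL}(0.75\|0.25)$ and $(1-\gamma/2)\log T$ (so the ``small $\gamma$'' in your parameters corresponds to their ``$\gamma$ close to $0$'' as well, but the cutoffs sit near $\log T/L$ rather than near $0$). Your constant $\tau\in(0,1)$ is perfectly adequate here because both the under-$\tau$ regret under $A$ and the over-$\tau$ regret under $B$ are $\Theta(1)$ per step; the vanishing choice only buys a cleaner leading constant.

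Second, and more substantively, Part~B is handled differently. You carry over the auxiliary-policy coupling and Kolmogorov's maximal inequality from Proposition~\ref{prop:lb1}. The paper instead exploits a structural feature of this instance: for any $X_t>0$ the log-likelihood increment depends only on $F_t$ and not on $X_t$, so one can introduce the policy-free ``complete'' process $\Lambda^c_t=\sum_{s\le t}\log\frac{0.75\mathbbm{1}\{F_s=1\}+0.25\mathbbm{1}\{F_s=-1\}}{0.25\mathbbm{1}\{F_s=1\}+0.75\mathbbm{1}\{F_s=-1\}}$, observe that $\sup_{t\le b(T)}\Lambda_t$ is stochastically dominated by $\sup_{t\le b(T)}\Lambda^c_t$, and then apply the maximal strong law of large numbers to the latter. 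This sidesteps the coupling entirely. Your argument is the more general one (it is exactly what is needed when the KL increment genuinely depends on $X_t$, as in the $M>0$ case), while the paper's is shorter but specific to this constant-$L$ instance. As a minor note, your justification ``because $X'_s\le\tau$'' for the variance bound is superfluous here: the increment variance equals $\tfrac{3}{4}(\log 3)^2$ for every $X'_s>0$ regardless of its size.
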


The structure of the proof for this case is similar to that of Proposition 3; in fact, the arguments are simpler since the KL-divergence of $w=A$ relative to $w=B$ (or vice-versa) is independent of the decisions taken by the policy for non-zero decisions. We detail the proof below for completeness. 

\begin{proof}{Proof.}
The policy $\psi_T$ along with the distribution of $F_t$ induces a probability distribution on the sequence of decision and utility pairs $(X_1,U_1,X_2,U_2,\cdots, X_T,U_T)$. Here $X_t$ is measurable with respect to the $\sigma$-algebra generated by $(X_1,U_1,X_2,U_2,\cdots, X_{t-1},U_{t-1})$, and $U_t$ is conditionally independent of the past given $X_t$. Let $\mathbb{P}_w$ and $\mathbb{E}_w$ denote the probabilities of events and expectations, respectively, under $w=A$ and $w=B$.

Define the (random) sequence of empirical log-likelihood ratios of $w=A$ relative to $w=B$, $(\Lambda_t)_{0\leq t\leq T}$, where $\Lambda_0=0$ and for $1\leq t\leq T$,
\begin{equation}
\Lambda_t = \sum_{s=1}^{t}\mathbbm{1}_{\{X_t>0\}}\log \frac{0.75\mathbbm{1}_{\{F_t = 1\}}+0.25\mathbbm{1}_{\{F_t = -1\}}}{0.75\mathbbm{1}_{\{F_t = -1\}}+0.25\mathbbm{1}_{\{F_t = 1\}}}.
\end{equation}
This is the sequence seen by the policy whenever a non-zero decision is taken (otherwise $F_t$ is not observed). We denote the {\it complete} sequence of empirical log-likelihood ratios by $(\Lambda^c_t)_{0\leq t\leq T}$, where $\Lambda^c_0=0$ and for $1\leq t\leq T$,
\begin{equation}
\Lambda^c_t = \sum_{s=1}^{t}\log \frac{0.75\mathbbm{1}_{\{F_t = 1\}}+0.25\mathbbm{1}_{\{F_t = -1\}}}{0.75\mathbbm{1}_{\{F_t = -1\}}+0.25\mathbbm{1}_{\{F_t = 1\}}}.
\end{equation}
Note that this sequence only depends on $F_t$ and is independent of the decisions $X_t$ taken by the policy. Similar to Proposition~\ref{prop:lb1}, the proof is now divided into two parts.\\

\noindent \textbf{Part A: One cannot raise decisions too quickly without obtaining sufficient empirical evidence to justify it.}\\


Define a threshold $\tau(T) = 1/T^{\gamma/4}$ for $\gamma>0$ and let $k_T = \max\{1\leq k\leq T \textrm{ s.t. } x_k < 1/T^{\gamma/4}\}$. Consider the following event: 
\begin{align}
&C_T \triangleq\bigg\{k_T \leq \frac{(1-\gamma)\log T}{D_{\KL}(0.75\|0.25)} \textrm{ and }\Lambda_{k_T} \leq (1-\frac{\gamma}{2})\log T\bigg\}.
 \end{align}
By the change of measure identity, we have that 
\begin{align}
\mathbb{P}_B(C_T)=\mathbb{E}_B(\mathbbm{1}_{C_T})= \mathbb{E}_A(\mathbbm{1}_{C_T}\exp(-\Lambda_{k_T})) \geq \mathbb{P}_A(C_T)T^{-1+\gamma/2}.\label{com2}
\end{align}
Hence, \begin{align*}
\Reg^T_{K\textup{-FH}}(B,\psi_T)&\overset{(a)}{\geq}\mathbb{P}_B(C_T)\frac{1}{T^{\gamma/4}}(T - k_T)\\
&\overset{(b)}{\geq} \mathbb{P}_A(C_T)\frac{T^{-1+\gamma/2}}{T^{\gamma/4}}(T - k_T)\\
&\overset{(c)}{\geq} \mathbb{P}_A(C_T)\frac{T^{-1+\gamma/2}}{T^{\gamma/4}}(T - \frac{(1-\gamma)\log T}{D_{\KL}(0.75\|0.25)}).
\end{align*}
Here, (a) follows from the fact that for $w=B$, on event $C_T$, one incurs a regret of $T^{-\gamma/4}$ per time step after $k_T$. (b) follows from \eqref{com2}, and (c) follows from the fact that on event $C_T$, $k_T\leq \frac{(1-\gamma)\log T}{D_{\KL}(0.75\|0.25)}  $. Since $\Reg^T_{K\textup{-FH}}(w,\psi_T)= \textup{o}(T^\alpha)$ for each $\alpha>0$ and for each $w$, we have $\mathbb{P}_A(C_T) = \textup{o}(1)$. Thus, if we denote $\overline{C}_T$ to be the complement of the event $C_T$, then we have,
\begin{align}\label{interim0}
\mathbb{P}_A(\overline{C}_T)&= \mathbb{P}_A(k_T> \frac{(1-\gamma)\log T}{D_{\KL}(0.75\|0.25)}\textrm{ or }\Lambda_{k_T} > (1-\frac{\gamma}{2})\log T ) = 1-o(1).
\end{align}


\noindent \textbf{Part B: It takes time (and hence regret) to gather sufficient empirical evidence.}\\

Next, denote $b(T) = \lfloor\frac{(1-\gamma)\log T}{D_{\KL}(0.75\|0.25)}\rfloor$. Then 
\begin{align}
\mathbb{P}_A(k_T\leq \frac{(1-\gamma)\log T}{D_{\KL}(0.75\|0.25)}\textrm{ and }\Lambda_{k_T} > (1-\frac{\gamma}{2})\log T )&= \mathbb{P}_A(k_T\leq b(T)\textrm{ and }\Lambda_{k_T} > (1-\frac{\gamma}{2})\log T) \nonumber\\
&\leq \mathbb{P}_A(k_T\leq b(T) \textrm{ and }\sup_{t\leq b(T)}\Lambda_{t} > (1-\frac{\gamma}{2})\log T) \nonumber\\
&\leq \mathbb{P}_A(\sup_{t\leq b(T)}\Lambda_{t} >(1-\frac{\gamma}{2})\log T) \nonumber\\
&\leq \mathbb{P}_A(\sup_{t\leq b(T)}\Lambda^c_{t} >(1-\frac{\gamma}{2})\log T)  \nonumber\\
&\leq \mathbb{P}_A(\frac{1}{b(T)}\sup_{t\leq b(T)}\Lambda^c_{t} > \frac{(1-\frac{\gamma}{2})}{1-\gamma}D_{\KL}(0.75\|0.25))\nonumber\\
& = \textup{o}(1).\label{interim1}
\end{align}
The last inequality results from the fact that, by the maximal version of the strong law of large numbers (see Theorem 2.2 \cite{bubeck2012regret}), 
$$\frac{1}{b(T)}\sup_{t\leq b(T)}\Lambda^c_{t} \overset{a.s.}{\longrightarrow}D_{\KL}(0.75\|0.25) \textrm{ as } T\rightarrow\infty.$$
Combining \eqref{interim0} and \eqref{interim1}, we finally have,
$$\mathbb{P}_A(k_T\geq \frac{(1-\gamma)\log T}{D_{\KL}(0.75\|0.25)})= 1-\textup{o}(1).$$
Hence, 
$$\Reg^T_{K\textup{-FH}}(A,\psi_T)\geq(1-o(1))(1-\frac{1}{T^{\gamma/4}})  \frac{(1-\gamma)\log T}{D_{\KL}(0.75\|0.25)}.$$
This implies the result.\hfill $\square$
\end{proof}

\paragraph{Remark.} In the absence of FH constraint, we can show that a $\textup{O}(1)$ cumulative regret can be guaranteed in the instance above. This can, for instance, be achieved by the following policy that always chooses $x_t>0$. Let $\bar{F}_t = (1/t)\sum_{s=1}^t U_t/x_t=(1/t) \sum_{s=1}^t F_t$. Then choose $x_1 = 1$ and for $t\geq 2$, choose $x_t = 1$ if $\bar{F}_{t-1} \geq 0$ and $x_t = e^{-t}$ if $\bar{F}_{t-1} < 0$. Thus the total expected regret on the event $w=1$ is upper bounded by 
$\sum_{t=1}^{T}\mathbb{P}_A(\bar{F}_{t-1} < 0),$
and the total expected regret on the event $w=1$ is upper bounded by 
$$\sum_{t=1}^{T}e^{-t}\mathbb{P}_B(\bar{F}_{t-1} < 0)+\mathbb{P}_B(\bar{F}_{t-1} \geq 0)\leq \sum_{t=1}^{T}e^{-t}+\mathbb{P}_B(\bar{F}_{t-1} \geq 0).$$
Both these quantities are $\textup{O}(1)$ as $T\rightarrow\infty$ since by standard Hoeffding bounds, $\mathbb{P}_B(\bar{F}_{t} \geq 0)\leq e^{-\nu_1 t}$ and $\mathbb{P}_A(\bar{F}_{t} < 0)\leq e^{-\nu_2 t}$ for some instance dependent constants $\nu_1,\, \nu_2>0$.

\section{Conclusion}\label{sec:conclusion}
In this paper, we proposed a new notion of fairness, fairness-in-hindsight, that extends the concept of individual fairness under temporal considerations. Our proposal is simple, intuitive, and importantly, we show that it assimilates well with sequential decision-making algorithms that involve learning, unlike the more straightforward notion of fairness-across-time. This latter aspect inspires optimism by suggesting that similar temporal fairness notions already embedded in our critical societal systems like law need not necessarily hinder learning good policies over time -- as we pointed out earlier, conservative exploration then exploitation structure of our fair-in-hindsight learning algorithm \CFE~is already observed in these contexts. Finally, fairness-in-hindsight can be a practical, first-order safeguard against claims of discrimination in modern algorithmic deployments.

In our proposed model, we assume that the model parameters come from a finite discrete set $\cW$. Also, in many real-world scenarios, the distribution of utilities might change over time. Extending these results to bounded model parameters (not necessarily finite) and changing utilities would be interesting extensions of this work.  

\bibliographystyle{apalike}
\bibliography{ref.bib}
\end{document}